\pdfoutput=1
\documentclass{article}
\usepackage{arxiv}
\usepackage[hyphens]{url}  
\usepackage{graphicx} 
\usepackage{algorithm}
\usepackage{algorithmic}
%
\usepackage{newfloat}
\usepackage{listings}

\usepackage{multirow}
\usepackage[T1]{fontenc}    
\usepackage{hyperref}       
\usepackage{booktabs}       
\usepackage{amsfonts}       
\usepackage{nicefrac}       
\usepackage{microtype}      
\usepackage{xcolor}         
\usepackage{amsmath,amsthm,amssymb}
\usepackage{bm}
\usepackage{pifont}
\usepackage[inline]{enumitem}
\usepackage{natbib}
\usepackage{array}
\newcolumntype{H}{>{\setbox0=\hbox\bgroup}c<{\egroup}@{}}

\newcommand{\rsqrt}[1]{#1^{-\frac{1}{2}}}
\newcommand{\RR}{I\!\!R} 
\newcommand{\CC}{I\!\!\!\!C} 
\newcommand{\HH}{I\!\!\!\!H} 

\DeclareMathOperator{\Diag}{Diag}
\DeclareMathOperator{\diag}{diag}

\DeclareMathOperator{\sgn}{sgn}
\newcommand{\red}[1]{\textcolor{red}{#1}}
\newcommand{\green}[1]{\textcolor{green}{#1}}

\newtheorem{theorem}{Theorem}
\newtheorem{corollary}{Corollary}

\renewcommand{\checkmark}{\green{\ding{51}}}
\newcommand{\xmark}{\red{\ding{53}}}
\newcommand{\ii}{\mathbf{i}}
\newcommand{\signum}{SigMaNet}

\newcommand{\quater}{QuaterGCN}
\newcommand{\laplaciano}{Quaternionic Laplacian}
\usepackage{scalerel,amssymb}

\newcommand{\jj}{\mathbf{j}}
\newcommand{\kk}{\mathbf{k}}
\usepackage{tikz}
\usetikzlibrary{graphs}
\usepackage[greek,british]{babel}
\usepackage{teubner}
\newcommand{\symbolapl}{L^{\text{\coppa}}}
\newcommand{\ku}{\text{\coppa}}

\newcommand{\Z}{\mathbb{Z}}

\bibliographystyle{unsrtnat}
\title{Graph Learning in 4D: a Quaternion-valued Laplacian to Enhance Spectral GCNs}

\author{%
  Stefano Fiorini\\
   Italian Institute of Technology\\
   Genoa, Italy  \\
   \texttt{stefano.fiorini@iit.it} \\
  \And
  Stefano Coniglio \\
  University of Bergamo \\
  Bergamo, Italy \\
  \texttt{stefano.coniglio@unibg.it} \\
  \AND
  Michele Ciavotta \\
  University of Milano-Bicocca \\
  Milan, Italy \\
  \texttt{michele.ciavotta@unimib.it} \\
  \And
  Enza Messina\\
  University of Milano-Bicocca \\
  Milan, Italy \\
  \texttt{enza.messina@unimib.it} \\
}

\begin{document}
\maketitle

\begin{abstract}

We introduce {\em \quater{}}, a spectral Graph Convolutional Network (GCN) with quaternion-valued weights at whose core lies the {\em \laplaciano{}}, a quaternion-valued Laplacian matrix by whose proposal we generalize two widely-used Laplacian matrices: the {\em classical Laplacian} (defined for undirected graphs) and the complex-valued \textit{Sign-Magnetic Laplacian} (proposed to handle digraphs with weights of arbitrary sign).
In addition to its generality, our \laplaciano{} is the only Laplacian to completely preserve the topology of a digraph,
as it can handle graphs and digraphs containing antiparallel pairs of edges (digons) of different weights without reducing them to a single (directed or undirected) edge as done with other Laplacians.
Experimental results show the superior performance of \quater{} compared to other state-of-the-art GCNs, particularly in scenarios where the information the digons carry is crucial to successfully address the task at hand.

\end{abstract}
\section{Introduction}

Deep Learning (DL) has recently achieved a striking success, contributing to the advancement of several research areas such as natural language processing \citep{NIPS2017_3f5ee243}, business intelligence~\citep{khan2020machine}, and cybersecurity~\citep{dixit2021deep}, only to name a few.
While many of the most popular DL architectures are designed to process data arranged in grid-like structures (such as RGB pixels in 2D images and video streams), many real-world phenomena are ruled by more general relationships that are better represented by a graph~\citep{bronstein2017geometric}.
For example, e-commerce relies on graphs to model the user-product interaction to make recommendations, drug discovery models molecule bioactivity via interaction graphs, and information discovery uses multi-relational knowledge graphs to extract information between the entities~\citep{knowledgegraph2022}.

Graph Convolutional Networks (GCNs)
model the interaction between the entities of a complex system via a graph-based {\em convolution operator}. They are the primary tool for machine learning tasks on data that enjoy a graph-like structure.
In spatial GCNs, the convolution operator is defined as a localized aggregation operator~\citep{xu2018powerful}. There are various implementations and extensions of this operator. E.g., in attention-based GCNs the convolutional operator is used to dynamically learn attention scores on the graph edges~\citep{veličković2018graph}, while, in recurrence-based GCNs, the operator leverages gated recurrent units to capture edge-specific information and temporal dependencies during message passing~\citep{li2016gated}.

While spatial GCNs rely on different heuristics for the definition of their convolution operator, {\em spectral} GCNs are solidly grounded in signal and algebraic graph theory. They rely on a convolution operator based on a Fourier transform applied to the eigenspace of the Laplacian matrix of the graph~\citep{kipf2016semi,he2022pytorch,wu2022graph} to capture the global structure of the graph.
Spectral GCNs have been shown to achieve superior performance to their spatial-based counterpart in a number of papers, see, e.g., \cite{zhang2021magnet, fiorini2022sigmanet}.
Our focus in the paper will be on extending the applicability of spectral GCNs. Spatial GCNs will be considered only for comparison purposes within computational experiments.

Despite the growing interest in academia and industry alike for neural-network methods suitable for progressively more general classes of graphs, the literature on spectral GCNs has only recently begun to include extensions beyond the basic case of unweighted, undirected graphs. As a result, the adoption of spectral CGNs in learning tasks involving graphs such as those featuring directed edges, digons, and edges with negative weights~\citep{he2022pytorch,wu2022graph} is still limited.
To (partially) address such limitations, alternative notions of the Laplacian matrix have been put forward, among which the \textit{Magnetic Laplacian}~\citep{lieb1993fluxes}, which was originally proposed in physics and first used within a spectral GCN in~\citep{zhang2021magnet}, albeit with the limitation of only handling digraphs non-negative edge weights.
Spectral-based GCNs suitable for digraphs with weights of unrestricted sign have been proposed only recently. Three such networks are SigMaNet~\citep{fiorini2022sigmanet} (based on the therein proposed {\em Sign-Magnetic Laplacian}), MSG-NN~\citep{he2022msgnn} (based on an extension of the {\em Magnetic Laplacian}), and the network proposed in~\citep{ko2022graph}.

%
Graphs with {\em digons} (pairs of antiparallel directed edges) occur in many important applications including, e.g., traffic and networking and, so far, have eluded every spectral-based GCN proposed in the literature.
Indeed, to the best of our knowledge none of the known spectral-based GCNs can handle graphs with digons with arbitrary weights and signs without collapsing them to single edges (either directed or undirected), which often results in destroying any topological information such digons may carry.


In this work, we extend the theory of spectral GCNs to digraphs with unrestricted edge weights also including digons. This is achieved by introducing the \emph{\laplaciano{}} $\symbolapl$, the first, to our knowledge, graph Laplacian matrix with quaternion-valued entries.\footnote{The letter $\ku$ in $\symbolapl$ is an archaic Greek letter often replaced in modern text by the Latin letter "q" as in "quaternion".}
We prove that $\symbolapl$ generalizes different previously proposed Laplacians (both real- and complex-valued), that it satisfies all the properties that are needed to build a convolutional operator around it, and that  it can naturally represent digons in such a way that the graph can be fully reconstructed from it without any loss of topological information.
Around the \laplaciano{}, we design \quater{}, a spectral GCN which relies on both
$\symbolapl$ and quaternion-valued network weights.

While some GCNs employing quaternion weights have been recently proposed~\citep{nguyen2021quaternion,wang2022quaternion,le2023knowledge}, only QGNN, proposed by~\citet{nguyen2021quaternion}, is spectral based. As QGNN relies on the classical convolution operator of~\citet{kipf2016semi} bases on the classical (real-valued) graph Laplacian, it fails to fully capture the whole graph topology unless the graph has non-negative edge weights and is undirected (which implies that it contains no digons).

We summarize the differences between the \laplaciano{} $\symbolapl$ we introduce in this work and previous proposals in Table~\ref{tab:difference}.
\begin{table*}[t!]
    \centering
\setlength{\tabcolsep}{2pt}
\caption{Main differences between Laplacian matrices used in spectral GCN literature. For a graph with edge set $E$, $w_1$ and $w_2$ denote the weights of the digons $(u, v), (v,u) \in E$.}
\label{tab:difference}
\begin{tabular}{llcccccc}
\hline
Laplacian & & Number & Allows Negative & \multicolumn{4}{c}{Allows Digons? } \\
\cline{5-8}
Symbol & Proposed in & System & Edge Weights? & if $w_1 = w_2$ & if $w_1 = - w_2$ & if $w_1 \neq w_2$ & if $w_1 \neq -w_2$ \\
\hline
$L$ & \citet{kipf2016semi}                    &$\RR$ & \xmark     & \checkmark  & \xmark        & \xmark         & \xmark          \\
$L^{(q)}$ & \citet{zhang2021magnet}            & $\CC$ & \xmark     & \checkmark  & \xmark        & \xmark         & \xmark      \\
$L^\sigma$ & \citet{fiorini2022sigmanet}           & $\CC$ & \checkmark & \checkmark  & \xmark        & \xmark         & \xmark          \\
$L^{(q)}$ & \citet{he2022msgnn}            & $\CC$ & \checkmark & \checkmark  & \xmark        & \xmark         & \xmark          \\
$L^{(q)}$ & \citet{ko2022graph}            & $\CC$ & \checkmark & \checkmark  & \checkmark    & \xmark         & \xmark          \\
\bm{$\symbolapl$}   &  \textbf{This paper}   & $\HH$ & \checkmark & \checkmark  & \checkmark    & \checkmark     & \checkmark      \\ \hline
\end{tabular}
\end{table*}
For each Laplacian, the table reports the paper in which it was proposed or used within a spectral GCN and the corresponding symbol (which is often reused with some overload). It also indicates the type of number system associated with each Laplacian 
and its capability to handle edges with negative weights and digons.
%

\paragraph{Main Contributions of The Work}
\begin{itemize}
\item We propose the quaternion-valued Quaternionic Laplacian matrix $\symbolapl$, which naturally captures the presence of digons of different weight ({\em asymmetric}) without reducing them to a single edge as done in many previously-proposed Laplacians.
%
\item We prove that $\symbolapl$ generalizes both the standard Laplacian and the Sign-Magnetic Laplacian, as it coincides with the former when $G$ is undirected and with the latter when $G$ features directed edges with weights of arbitrary sign and all its digons have the same weight ({\em symmetric}).
\item We incorporate $\symbolapl$ into \quater{}, a spectral-based GCN that includes quaternion-valued convolutional layers and quaternion-valued network weights.
\item Our experiments demonstrate that QuaterGCN consistently outperforms state-of-the-art spatial and spectral GCNs, particularly on tasks and datasets where the information carried by the digons is critical.

%
%
\end{itemize}



\section{Preliminaries and Previous Works}\label{sec:preliminaries}

Let $G=(V,E)$ be an undirected graph with $n = |V|$ vertices without weights nor signs associated with its edges and let $A \in \{0,1\}^{n \times n}$ be its adjacency matrix. 
The {\em classical Laplacian matrix} $L \in \Z_+^{n \times n}$ of $G$ is defined as $L := D - A$, where $D := \diag(A \, e)$ is the degree matrix of $G$, $e$ is the all-one vector of appropriate size, and the operator $\diag$  builds a diagonal matrix with the argument on the main diagonal.
The normalized version of $L$ is defined as $L_\text{norm} := \rsqrt{D}(D - A)\rsqrt{D} = I - \rsqrt{D} A \rsqrt{D}$.

For a spectral convolution operator to be well-defined,  the graph Laplacian must fulfill three properties:
\begin{enumerate*}[label=\textit{\roman*)},series=MyList, before=\hspace{-0.3ex}]
    \item[\bf{P.1)}] it must be diagonalizable, i.e., it must admit an eigenvalue decomposition;
    \item[\bf{P.2)}] it must be positive semidefinite;
    \item[\bf{P.3)}] its spectrum must be upper-bounded by 2~\citep{kipf2016semi}.
\end{enumerate*}

While $L$ and $L_\text{norm}$ always satisfy such properties if $G$ is undirected (i.e., $A$ is symmetric) and has nonnegative edge weights (i.e., $A$ is component-wise nonnegative), this is not always the case for more general graphs.
%
%
%
When $G$ is a digraph, $L$ is sometimes defined as a function of $A_s := \frac{1}{2}(A^\top + A)$  and $D_s := \Diag(A_s e)$, rather than of $A$ and $D$. This is, e.g., the case of QGNN~\citep{nguyen2021quaternion}. While such a choice preserves the mathematical properties of $L$, it significantly alters the topology of the graph. Indeed, when general graphs are considered, the following issues may arise:
\begin{itemize}
    \item {\bf Issue 1.} If $G$ is a digraph, defining $L$ as a function of $A_s$ is equivalent to transforming the original graph into an undirected version of it, losing any directional information the original graph contained.
    %
    \item {\bf Issue 2.} If $G$ features negative-weighted edges, neither $A$ nor $A_s$ belong, in general, to $\Z_+^{n \times n}$. This can lead to $D_{uu} < 0$ for some $u \in V$, in which case $L$ is not well defined in $\Z_+^{n \times n}$ due to $\rsqrt{D}$ being irrational.
    \item {\bf Issue 3.} If $G$ contains asymmetric digons, the weight asymmetry is completely lost in $A_s$, since the latter only features the average of the two weights. For example, in an author-citation graph, a digon representing two authors, the first one citing the second one 50 times while being cited by them only 2 times, would be identical to the two authors symmetrically citing each other precisely 26 times.
\end{itemize}

{\bf Issue 1} was first addressed by~\citet{zhang2021magnet, zhang2021smgc} by introducing Magnet, a spectral GCN relying on the \textit{Magnetic Laplacian} $L^{(q)}$. Such a Laplacian is a complex-valued extension of $L$ to unweighted directed graphs, and was originally proposed within an electro-magnetic charge model by~\citet{lieb1993fluxes}. It is defined as follows:
\begin{align*}
& L^{(q)} := D_s - H^{(q)},  \quad \text{with}\quad \\
& H^{(q)} := A_s \odot \exp \left(\ii \, \Theta^{(q)} \right), 
& \Theta^{(q)} := 2 \pi q\left(A-A^\top \right),    
\end{align*}
where $\odot$ denotes the Hadamard product,
$\ii = \sqrt{-1}$, $\Theta$ is a phase matrix that encodes the edge directions, and $\exp \left(\ii \, \Theta^{(q)} \right) := \cos(\Theta^{(q)}) + \ii \sin(\Theta^{(q)})$, where $\cos$ and $\sin$ are applied component-wise. 
The parameter $q \in \mathbb{R}_{0}^+$ is usually chosen in $\left[0, \frac{1}{4}\right]$ or $\left[0, \frac{1}{2}\right]$~\cite{zhang2021magnet,mag2017}. Choosing $q = 0$ implies $\Theta^{(q)} = 0$ and thus $L^{(q)}$ boils down to the Laplacian matrix $L$ defined on $A_s$ (in which case the directionality of $G$ is lost).

{\bf Issue 2} was first addressed by~\citet{fiorini2022sigmanet} via the introduction of the spectral GCN SigMaNet and the {\em Sign-Magnetic Laplacian} $L^{\sigma}$. Unlike $L^{(q)}$, $L^{\sigma}$ is well defined for graphs with negative edge weights and enjoys some robustness properties to weight scaling (which in $L^{(q)}$ could artificially alter the sign pattern of $\Theta$ and thus the directionality of the edges). $L^{\sigma}$ is defined as follows:
\small
\begin{align*}
& L^{\sigma} := \bar D_s - H^{\sigma}, \quad \text{with} \quad \\
& H^{\sigma} := A_s \odot \Big( ee^\top  - \sgn (|A - A^\top|) + \ii \sgn \big(|A| - |A^\top| \big) \Big), 
\end{align*}
\normalsize
where
$\bar D_s := \Diag(|A_s| \, e)$ and $\sgn: \RR \rightarrow \{-1,0,1\}$ is the {\em signum} function applied component-wise.
After~\citet{fiorini2022sigmanet}, \citet{he2022msgnn} extended the Magnetic Laplacian as a function of $\bar D_s := \Diag(\frac{|A| + |A|^\top}{2} \, e)$ rather than $D_s$. This Laplacian is well-defined for graphs with negative edge weights, but it still suffers from the scaling issue pointed out by \citet{fiorini2022sigmanet} which $L^{\sigma}$ avoids.

{\bf Issue 3}, to the best of our knowledge, has not yet been addressed. We set ourselves out to do so in this paper.

\section{A Quaternion-Valued Laplacian}\label{sec:laplaciano}

In this section, we introduce the \emph{\laplaciano{}} matrix, a positive semidefinite quaternion-valued Hermitian matrix which fully captures the directional and weight information of a digraph, even in the presence of digons, without restrictions on the sign or magnitude of the edge weights.

\subsection{The Quaternion Number System}

Quaternions are an extension of complex numbers to three imaginary components~\citep{hamilton1866elements}.
They are often used in quantum mechanics, where they lead to elegant expressions of the Lorentz transformation, which forms the basis of modern relativity theory~\citep{jia2008quaternions}. In computer graphics, quaternions are commonly used to represent and manipulate 3D objects for rotation estimation and pose graph optimization~\citep{carlone2015initialization}.

Formally, a quaternion $q \in \HH$ takes the form $q = q_0 + \ii q_1 + \jj q_2 + \kk q_3$, where $q_0, q_1, q_2$ and $q_3$ are real numbers and $\ii,\jj$, and $\kk$ are three imaginary units satisfying $\ii^2 = \jj^2 = \kk^2 = \ii\jj\kk = -1$. The four basis elements are $1$, $\ii$, $\jj$, and $\kk$. The conjugate of $q$ is $\bar q \equiv q^* = q_0 - \ii q_1 - \jj q_2 - \kk q_3$. $q$ is called imaginary if its real part $q_0$ is zero. The multiplication of quaternions satisfies the distribution law but is not commutative.
A quaternion-valued matrix $Q = (q_{uv}) \in \HH^{m\times n}$ reads $Q = Q_0 + \ii Q_1 + \jj Q_2 + \kk Q_3$, with $Q_0, Q_1, Q_2, Q_3 \in \RR^{m\times n}$. $Q^\top = (q_{vu})$ is the transpose of $Q$. $\bar Q = (\bar q_{uv})$ is the conjugate of $Q$. $Q^* = (\bar q_{vu}) = \bar Q^\top$ is the conjugate transpose of $Q$.
A square quaternion matrix $Q \in \HH^{n \times n}$ is called Hermitian if $Q^* = Q$ and skew-Hermitian if $Q^* = -Q$.
We denote the real part and the three imaginary parts of a quaternion $q \in \HH$ by $\Re(q)$, $\Im_1(q)$, $\Im_2(q)$, and $\Im_3(q)$.

\subsection{The \laplaciano{}}

Let us now introduce the \laplaciano{}, which we denote by $\symbolapl$. First, we introduce the following matrices:
\begin{itemize}
    \item Let $T:= \sgn(|A|)\in \{0, 1\}^{n \times n}$ be a binary matrix that encodes the graph's topology, with $T_{uv} = 1$ if $G$ contains an edge from node $u$ to node $v$ and $T_{uv} = 0$ otherwise.
    %
    \item Let $O := T \odot T^\top\in \{0, 1\}^{n \times n}$ be the binary matrix that encodes the topology of the subgraph of $G$ that only contains digons (by definition, $O$ is symmetric and $O_{uv}= O_{vu}=1$ iff $T_{uv}= T_{vu}=1$).
    %
    \item Let $N:= \sgn(|A - A^\top|) \in \{0, 1\}^{n \times n}$ be a binary matrix that encodes the topology of the subgraph of $G$ obtained by dropping any symmetric digons (by definition, $N_{uv} = 0$ if $A_{uv} = A_{vu}$ and $N_{uv}=T_{uv}$ otherwise).
    %
    \item Let $R:= \sgn(|A| - |A^\top|) \in \{-1, 0, 1\}^{n \times n}$ be a signed binary matrix in which every asymmetric digon $(u,v),(v,u)$ is reduced to a single edge in the direction of the largest absolute weight ($R_{uv} = T_{uv}$ if $A_{uv} > A_{vu}$,  $R_{uv} = - T_{uv}$ if $A_{uv} < A_{vu}$, and $R_{uv} = 0$ otherwise).
\end{itemize}

With these definitions, we introduce the four  matrices $H^0, H^1, H^2, H^3$, whose elements are valued in \{-1, 0, 1\}:
$$
\begin{array}{ll}
\hspace{-0.2cm} H^0 := ee^\top  - N  &\hspace{-0.2cm}  H^1 := R \odot \big( e e^\top - O \big) \\
\hspace{-0.2cm}H^2 := O \odot N \odot \big(U(T) - L(T^\top) \big) &\hspace{-0.2cm}  H^3  := -H^2,
\end{array}
$$
where $U$ and $L$ are the unary operators that construct an upper- or lower-triangular matrix from the upper or lower triangle of the matrix given to them as input.

$H^0$ only encodes $G$'s symmetric digons, $H^1$ encodes all of $G$'s edges excluding digons, and $H^2$ and $H^3$ encode $G$'s asymmetric digons in a skew-symmetric way.
We remark that 
the three matrices $H^1, H^2, H^3$ are skew-symmetric by construction (i.e., $H^1_{uv} = -H^1_{vu}$, $H^2_{uv} = -H^2_{vu}$, $H^3_{uv} = -H^3_{vu}$ for all $u,v \in V$), whereas $H^0$ is symmetric.

Based on the four matrices $H^0, H^1, H^2, H^3$, we now define the \laplaciano{} as follows:
\begin{align}
\nonumber
& \symbolapl := \bar D_s - H^{\ku}, \quad \text{with} \quad \\
& H^{\ku} = A^1_s \odot (H^0 + \ii H^1) + \jj A_s^2 \odot  H^{2} + \kk A_s^3 \odot H^{3},
\end{align}
%
with $A^1_s := \frac{A + A^\top}{2}$, $A^2_s := \frac{U(A) + L(A^\top)}{2}$, $A^3_s := \frac{L(A) + U(A^\top)}{2}$, and 
$\bar D_s := \Diag(|A^1_s| \, e)$.
Its normalized version reads:
\begin{equation}\label{eq:norma}
  L^{\ku}_\text{norm} := \rsqrt{\bar D_s} L^{\ku} \rsqrt{\bar D_s} = I - \rsqrt{\bar D_s} H^{\ku} \rsqrt{\bar D_s}.
\end{equation}

\subsection{On the Nature of the Elements of $H^\ku$}

Let us illustrate how the topology of $G$ and its weights are mapped into the matrix $H^{\ku}$.

\begin{enumerate}
    \item For every edge $(u,v) \in E$ with $(v,u) \notin E$
    (i.e., not contained in a digon)
    $H^{\ku}_{uv} = - H^{\ku}_{vu} = 0 + \ii \frac{1}{2} A_{uv} + \jj0 + \kk0$ holds. Thus, the edge is purely mapped in the first imaginary component ($\ii$) of $H^{\ku}$.
    \item For every symmetric digon $(u,v),(v,u) \in E$ with $A_{uv} = A_{vu}$, $H^{\ku}_{uv} = H^{\ku}_{vu} = \frac{1}{2}(A_{uv} + A_{vu}) + \ii 0 + \jj 0 + \kk 0$ holds. Thus, such digons are encoded purely in the real part of $H^\ku$ (as if they coincided with an undirected edge).
    \item For every asymmetric digon $(u,v),(v,u) \in E$ with $A_{uv} \neq A_{vu}$, $H^{\ku}_{uv} = -H^{\ku}_{vu} = 0 + \ii 0 + \jj \frac{1}{2} A_{uv} - \kk \frac{1}{2} A_{vu}$ (if $u<v$) and $H^{\ku}_{uv} = -H^{\ku}_{vu} = 0 + \ii 0 - \jj \frac{1}{2} A_{uv} + \kk \frac{1}{2} A_{vu}$ (if $u>v$) hold.
    Such digons are thus encoded purely in the second and third ($\jj$ and $\kk$) imaginary components.
\end{enumerate}

$L^\ku$ realizes the same mapping as $L^\sigma$ in cases 1 and 2, but not in case 3. In such a case, while in $L^\sigma$ every asymmetric digon is mapped into a single directed edge in the direction of largest magnitude (thus losing parts of it topology), in $L^\ku$ the topology of such a digon is completely preserved.

For each $u,v \in V$, the three imaginary parts of $L^\ku$ are orthogonal: if $u$ and $v$ share only one edge, this edge is reported in the imaginary component $\ii$; if they share two edges of different weights, the two edges are reported in the imaginary components $\jj$ and $\kk$; if they share two edges of the same weight, the edges are reported as a single undirected edge in the real component of $L^\ku$; if the two nodes share no edges, nothing is reported.

\subsection{From a Graph to its \laplaciano{}}

As an illustrative example, consider the graph depicted in Figure~\ref{fig:graph-matrix}, together with its weighted adjacency matrix.
The graph has the following characteristics:
\begin{enumerate}
    \item A single undirected edge $(1,2)$ (which is equivalent to a symmetric digon).
    \item A directed edge $(3,1)$.
    \item Two asymmetric digons, $(2,4)-(4,2)$ and $(3,4)-(4,3)$, with different weights that share the same sign.
\end{enumerate}
%

\setlength\arraycolsep{2pt}
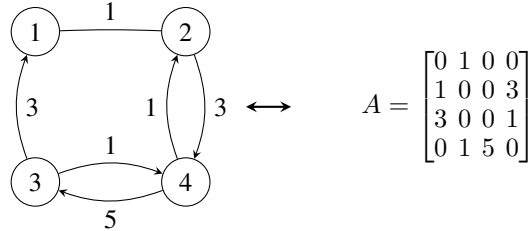
\begin{figure}[htb!]
\centering
\begin{tikzpicture}[>=stealth]
  \tikzset{vertex/.style = {circle, draw, fill=white, inner sep=3.5pt, minimum size=15pt}}
  
  \node[vertex] (v1) at (0,0) {1};
  \node[vertex] (v2) at (2,0) {2};
  \node[vertex] (v3) at (0,-2) {3};
  \node[vertex] (v4) at (2,-2) {4};
  
  \path[-] (v1) edge[bend left=2] node[above]{1} (v2);
  \path[->] (v3) edge[bend left=20] node[right]{3} (v1);
  \path[->] (v2) edge[bend left=20] node[right]{3} (v4);
  \path[->] (v4) edge[bend left=20] node[left]{1} (v2);
  \path[->] (v3) edge[bend left=20] node[above]{1} (v4);
  \path[->] (v4) edge[bend left=20] node[below]{5} (v3);
  
  \draw[thick, <->] (2.8,-1) -- (3.4,-1);
  
  \node at (5.5, -1) {$ A= \begin{bmatrix}
  0 & 1 & 0 & 0 \\
  1 & 0 & 0 & 3  \\
  3 & 0 & 0 & 1 \\
  0 & 1 & 5 & 0 \\
  \end{bmatrix}$};
\end{tikzpicture}
\caption{Graph and Adjacency Matrix}
\label{fig:graph-matrix}
\end{figure}

For this graph, $\symbolapl$ reads:
\begin{align*}
& \symbolapl := \bar D_s - H^{\ku}, \quad \text{with}  \\
& H^{\ku} := A^1_s \odot (H^0 + \ii H^1) + \jj A_s^2 \odot  H^{2} + \kk A_s^3 \odot H^{3},
\end{align*}
with
$A^1_s~=\begin{bmatrix}
 0 & 1 & 1.5 & 0 \\
 1 & 0 & 0 & 2 \\
 1.5 & 0 & 0 & 3 \\
 0 & 2 & 3 & 0 \\
\end{bmatrix}$,
$A^2_s~=~\begin{bmatrix}
 0 & 0.5 & 0 & 0 \\
 0.5 & 0 & 0 & 1.5 \\
 0 & 0 & 0 & 0.5 \\
 0 & 1.5 & 0.5 & 0 \\
 \end{bmatrix}$, 
$A^3_s~=~\begin{bmatrix}
 0 & 0.5 & 1.5 & 0 \\
 0.5 & 0 & 0 & 0.5 \\
 1.5 & 0 & 0 & 2.5 \\
 0 & 0.5 & 2.5 & 0 \\
 \end{bmatrix}, \text{ and }
$ 

$H^0~=~\begin{bmatrix}
 1 & 1 & 0 & 1 \\
 1 & 1 & 1 & 0 \\
 0 & 1 & 1 & 0 \\
 1 & 0 & 0 & 1 \\
\end{bmatrix}$,
\hspace{1.2cm}$H^1~=~\begin{bmatrix}
 0 & -1 & 0 & 0 \\
 1 & 0 & 0 & 0 \\
 0 & 0 & 0 & 0  \\
 0 & 0 & 0 & 0 \\
\end{bmatrix}$,

$H^2~=~\begin{bmatrix}
 0 & 0 & 0 & 0 \\
 0 & 0 & 0 & 1 \\
 0 & 0 & 0 & 1 \\
 0 & -1 & -1 & 0 \\
\end{bmatrix}$,
\quad\quad$H^3~=~\begin{bmatrix}
 0 & 0 & 0 & 0 \\
 0 & 0 & 0 & -1 \\
 0 & 0 & 0 & -1 \\
0 & 1 & 1 & 0 \\
\end{bmatrix}.
$

We note that $H^0$ contains a 1 for each edge that does not belong to a digon with different weights or to a single directed edge without an antiparallel one. It features an all-1 diagonal, whose values would become zeros after the multiplication by $A_s^1$.
$H^1$ contains a $\pm1$ for each undirected edge with a positive sign for the actual edge direction and a negative one for the inverse direction.
$H^2$ and $H^3$ contain a 1 for each pair of edges belonging to a digon with different edge weights. By construction, $H^3$ is the opposite of $H^2$.

$\symbolapl$ thus the reads:
\begin{equation*}
    \symbolapl = 
  \begin{bmatrix}
  2.5 & -1 & \ii \, 1.5 & 0 \\
  -1 & 3 & 0 & - \jj \, 1.5 + \kk \, 0.5 \\
  - \ii \, 1.5 & 0 & 4.5 & -\jj \, 0.5 + \kk \, 2.5 \\
  0 & \jj \, 1.5 - \kk \, 0.5 & \jj \, 0.5 - \kk \, 2.5 & 5
  \end{bmatrix}.
    \end{equation*}
By inspecting $\symbolapl$, one can observe that it encodes the elements of the graph in the following way:
\begin{enumerate}
\item The undirected edge is encoded via the real component $\symbolapl_{12} = \symbolapl_{21} = -1$;
\item The directed edge is encoded via the $\ii$ component,
$\symbolapl_{13} = \symbolapl_{31} = \ii \, 1.5$;
\item The two digons with different weights that share the same sign are encoded via the $\jj$ and $\kk$ components:
\begin{enumerate}
    \item $\symbolapl_{24} = - \symbolapl_{42} = - \jj \, 1.5 + \kk \, 0.5$;
    \item $\symbolapl_{34} = - \symbolapl_{43} = -\jj \, 0.5 + \kk \, 2.5$.
\end{enumerate}
\end{enumerate}

\subsection{Relationship between $\symbolapl$ and other Laplacians}

The \laplaciano{} is designed in such a way that it satisfies several desirable properties which we now illustrate.

For undirected graphs with positive edge weights, $\symbolapl$ generalizes the classical Laplacian $L$:
\begin{theorem}
$L^{\ku} = L$ for every graph with $A$ symmetric and nonnegative and $D_{vv} > 0$ for all $v \in V$.
\end{theorem}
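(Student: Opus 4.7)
The plan is a direct unpacking of the definitions. Since $A$ is symmetric and nonnegative, the asymmetry-detecting matrices collapse to zero, and the quaternionic construction degenerates to a purely real one that coincides with $L$.

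First I would exploit symmetry. Because $A = A^\top$, we immediately get $A - A^\top = 0$ and $|A| - |A^\top| = 0$, which by the definitions given in Section~\ref{sec:laplaciano} yields $N = \sgn(|A-A^\top|) = 0$ and $R = \sgn(|A|-|A^\top|) = 0$. Moreover, since $T = \sgn(|A|)$ is itself symmetric, $O = T \odot T^\top = T$. Second I would simplify the weight-averaging matrices: symmetry gives $A^1_s = \tfrac{1}{2}(A + A^\top) = A$, and nonnegativity gives $|A^1_s| = A$, so $\bar D_s = \Diag(A\,e) = D$, matching the degree matrix used in the classical Laplacian.

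Next I would substitute into the four sign-pattern matrices. From $N = 0$ I get $H^0 = ee^\top - N = ee^\top$, from $R = 0$ I get $H^1 = R \odot (ee^\top - O) = 0$, and from $N = 0$ I get $H^2 = O \odot N \odot (U(T) - L(T^\top)) = 0$ and hence $H^3 = -H^2 = 0$. Plugging these into the definition
\[
H^{\ku} = A^1_s \odot (H^0 + \ii H^1) + \jj\, A^2_s \odot H^2 + \kk\, A^3_s \odot H^3
\]
collapses the quaternionic sum to $H^{\ku} = A \odot ee^\top = A$, a purely real matrix. Therefore $L^{\ku} = \bar D_s - H^{\ku} = D - A = L$, as claimed.

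There is no genuine obstacle here: the statement is a sanity check confirming that the generalization is consistent with the classical case, and the proof is a short verification once one observes that both $N$ and $R$ vanish under the hypotheses. The only thing worth being explicit about is why the $u=v$ diagonal of $H^0$ (which equals $1$ from $ee^\top$) does not contaminate the result: it is killed by the Hadamard product with $A^1_s = A$, whose diagonal is zero because $G$ has no self-loops (implicit in the setup, and consistent with the hypothesis $D_{vv} > 0$ ensuring that $\rsqrt{D}$ is well defined if one later passes to the normalized Laplacian).
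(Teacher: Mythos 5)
Your proof is correct and follows essentially the same route as the paper's: observe that symmetry of $A$ forces the asymmetry-encoding matrices to vanish, so that $H^{\ku}$ reduces to the real matrix $A$ and $\bar D_s$ to $D$. In fact your version is more complete than the paper's own two-line argument (which invokes matrices $W^D$ and $N^D$ never defined in the text and states $H^{\ku}=H^0$ rather than $H^{\ku}=A^1_s\odot H^0=A$), and your closing remark about the diagonal of $ee^\top$ being annihilated by the Hadamard product with $A$ is a worthwhile detail the paper omits.
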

For digraphs with arbitrary edge weight featuring, if any, symmetric digons, $\symbolapl$ generalizes the Sign-Magnetic Laplacian $L^\sigma$:
\begin{theorem}\label{thm:sigmanet}
$L^{\ku} = L^{\sigma}$ if, for all $u,v \in V$, either $A_{uv} = 0$ or $A_{uv} = A_{vu}$.
\end{theorem}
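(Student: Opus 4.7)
The plan is to show that, under the hypothesis, all purely imaginary pieces of both $L^{\ku}$ and $L^{\sigma}$ vanish simultaneously, reducing both matrices to the same real expression $\bar D_s - A_s$.

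First I would observe that the hypothesis forces $A = A^\top$. Indeed, if $A_{uv}\neq 0$ the condition gives $A_{uv}=A_{vu}$ directly; and if $A_{uv}=0$, applying the condition to the pair $(v,u)$ yields $A_{vu}=0=A_{uv}$ as well. From this symmetry, the basic ingredients in the definition of $H^{\ku}$ collapse: $A - A^\top = 0$, so $N = \sgn(|A-A^\top|) = 0$; $|A| - |A^\top| = 0$, so $R = \sgn(|A|-|A^\top|) = 0$; $T = \sgn(|A|)$ is symmetric, so $O = T \odot T^\top = T$; and $A^1_s = \tfrac{1}{2}(A + A^\top) = A = A_s$.

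Substituting these into the formulas for the building blocks $H^0,\dots,H^3$, one obtains $H^0 = ee^\top - N = ee^\top$, $H^1 = R \odot (ee^\top - O) = 0$, $H^2 = O \odot N \odot (U(T) - L(T^\top)) = 0$ (killed by $N=0$), and hence $H^3 = -H^2 = 0$. Plugging into the definition of $H^{\ku}$ leaves only the real component, giving $H^{\ku} = A^1_s \odot H^0 = A_s \odot ee^\top = A_s$, and therefore $L^{\ku} = \bar D_s - A_s$.

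Next I would perform the analogous reduction for $L^{\sigma}$: since both $\sgn(|A-A^\top|)$ and $\sgn(|A| - |A^\top|)$ vanish under the hypothesis, $H^{\sigma} = A_s \odot \big(ee^\top - 0 + \ii\,0\big) = A_s$, so $L^{\sigma} = \bar D_s - A_s$ as well. Finally, the normalizing diagonals coincide because both are defined as $\Diag(|A_s|\,e) = \Diag(|A^1_s|\,e)$ and $A_s = A^1_s$ here. Combining these identities yields $L^{\ku} = L^{\sigma}$. I do not anticipate any real obstacle: the argument is a bookkeeping exercise, and the only thing worth verifying carefully is that the single hypothesis simultaneously annihilates every off-real contribution in both $H^{\ku}$ and $H^{\sigma}$.
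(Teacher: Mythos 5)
Your bookkeeping under the assumption $A = A^\top$ is internally correct, but the literal reading of the hypothesis that you adopt collapses the theorem into a near-triviality and misses what it is actually meant to assert. The sentence introducing the theorem (``for digraphs with arbitrary edge weight featuring, if any, symmetric digons, $\symbolapl$ generalizes $L^\sigma$''), the subsequent corollary, and the paper's own proof all make clear that the intended hypothesis is that every \emph{digon} is symmetric: single directed edges with $A_{uv}\neq 0$ and $A_{vu}=0$ are very much allowed, and they are the only interesting case, since it is there that one must check that both Laplacians place the edge on the $\ii$ component rather than on the real one. By deducing $A=A^\top$ you have --- correctly under the letter of the statement, which is evidently imprecisely worded --- legislated that case out of existence, so your argument only shows that both Laplacians reduce to $\bar D_s - A_s$ on symmetric adjacency matrices, which is essentially the content of the paper's first theorem.

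To prove the theorem the paper intends (and proves), you must add the directed-edge case: if $A_{uv}\neq 0$ and $A_{vu}=0$, then $N_{uv}=\sgn(|A_{uv}|)=1$ gives $H^0_{uv}=0$; $O_{uv}=0$ together with $R_{uv}=\sgn(|A_{uv}|)=1$ gives $H^1_{uv}=1$; and $H^2_{uv}=H^3_{uv}=0$ because $O_{uv}=0$. Hence $H^{\ku}_{uv}=(A^1_s)_{uv}\,\ii=\ii\,\tfrac{1}{2}A_{uv}$, which matches $H^\sigma_{uv}=(A_s)_{uv}\bigl(1-1+\ii\bigr)=\ii\,\tfrac{1}{2}A_{uv}$ (and similarly $H^{\ku}_{vu}=-\ii\,\tfrac{1}{2}A_{uv}=H^\sigma_{vu}$). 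Combined with your symmetric-digon computation and the observation that the two degree matrices agree because $A^1_s=A_s$, this entrywise comparison is exactly the paper's proof. As a side remark, the paper's own write-up of these two cases drops the weight factors (it writes $\ii\tfrac{1}{2}$ and $1$ where $\ii\tfrac{1}{2}A_{uv}$ and $a$ are meant), so your more careful matrix-level reduction is worth keeping --- it simply needs to cover the one case the theorem is really about.
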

As $L^\sigma$ coincides with $L^{(q)}$ with $q=\frac{1}{4}$~\citep{fiorini2022sigmanet}, the following holds:
\begin{corollary}
$\symbolapl= L^{(q)}$ with $q=\frac{1}{4}$ for every digraph with $A \in \{0, 1\}^{n \times n}$ containing, if any, symmetric digons.
\end{corollary}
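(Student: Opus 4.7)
The plan is to derive this corollary as a short transitive chain of two equalities, using Theorem~\ref{thm:sigmanet} for the first link and the identity $L^\sigma = L^{(q)}\bigr|_{q=1/4}$ established by~\citet{fiorini2022sigmanet} for the second. No new calculations on $H^0, H^1, H^2, H^3$ or on the Hadamard products defining $H^\ku$ should be needed, since the heavy lifting has already been done in Theorem~\ref{thm:sigmanet}.

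First, I would verify that the hypotheses of the corollary imply the hypotheses of Theorem~\ref{thm:sigmanet}. Concretely, under $A \in \{0,1\}^{n\times n}$ with only (if any) symmetric digons, I would argue that for every pair $u,v \in V$ one of the following must hold: either $A_{uv} = 0$, or $A_{uv} = 1$ while $A_{vu} = 1$ (a symmetric digon, forcing $A_{uv}=A_{vu}$), or $A_{uv}=1$ while $A_{vu}=0$, which would correspond to a unidirectional binary edge and is covered by ``$A_{uv} = A_{vu}$'' being false but with no digon partner to break symmetry. Actually, since the corollary explicitly restricts to graphs whose digons, if any, are symmetric, the single bit of care needed is to check that the $\{0,1\}$-valued nature of $A$ combined with this restriction ensures: for every $u,v$, either $A_{uv}=0$ or $A_{uv}=A_{vu}$ --- which is exactly what Theorem~\ref{thm:sigmanet} requires. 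Hence $\symbolapl = L^\sigma$ by direct application of that theorem.

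Second, I would invoke the identity $L^\sigma = L^{(q)}$ at $q = \tfrac{1}{4}$ from~\citet{fiorini2022sigmanet}, which holds precisely for digraphs with $A \in \{0,1\}^{n\times n}$ (the binary assumption is what prevents the weight-scaling pathology of $L^{(q)}$ that $L^\sigma$ is designed to avoid, as discussed in Section~\ref{sec:preliminaries}). Since this hypothesis is part of the corollary's assumption, the second equality applies immediately. Chaining $\symbolapl = L^\sigma = L^{(q)}\bigr|_{q=1/4}$ yields the conclusion.

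The only conceptual subtlety --- and hence the ``main obstacle'' --- is ensuring that the single pair of hypotheses in the corollary simultaneously supplies both cited results. The binary-matrix assumption is needed for the second equality (weighted graphs would break the $L^\sigma = L^{(q)}\bigr|_{q=1/4}$ coincidence) while the symmetric-digon assumption is needed for the first. Once this is made explicit, the proof collapses to one line.
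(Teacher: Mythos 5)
Your proposal is correct and matches the paper's own (implicit) argument exactly: the corollary is obtained by chaining Theorem~\ref{thm:sigmanet} with the identity $L^\sigma = L^{(q)}$ at $q=\tfrac{1}{4}$ for binary adjacency matrices, which is precisely what the sentence introducing the corollary in the paper does. The only wrinkle is that for a unidirectional edge ($A_{uv}=1$, $A_{vu}=0$) the literal hypothesis of Theorem~\ref{thm:sigmanet} (``$A_{uv}=0$ or $A_{uv}=A_{vu}$'') fails as written, but that is an imprecision in the paper's theorem statement (whose proof explicitly covers the unidirectional case), not a flaw in your reduction---and note that for $A\in\{0,1\}^{n\times n}$ every digon is automatically symmetric, so the corollary's digon clause is vacuous.
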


If the graph does not feature any symmetric digons, the matrix $H^{\ku}$ from which $\symbolapl$ is defined coincides with a linear combination of $L^\sigma$ with a Hermitian matrix encoding the asymmetric digons:
\begin{theorem}
Consider a weighted digraph without symmetric digons and let $H^m := \left( A_s^2 \odot H^2_1 + A_s^3 \odot H^3_1 \right)$, where $H^2_1 = N \odot \big(U(T) - L(T^\top) \big)$ and $H^3_1 = - H^2_1$. We have $H^0 = 0$ and $H^{\ku} = H^{\sigma} \odot \left( e e^\top - O \right) + O \odot H^m$. Thus, each component of $H^{\ku}_{uv}$ is a linear combination of the component $H^\sigma_{uv}$ of the Sign-Magnetic Laplacian $H^{\sigma}$ and the component $H^m_{uv}$ of the quaternionic Hermitian matrix $H^m$.
\end{theorem}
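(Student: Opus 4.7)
The approach is to verify the identity entrywise, partitioning the pairs $(u,v) \in V \times V$ according to whether $O_{uv} = 0$ (no digon at $\{u,v\}$) or $O_{uv} = 1$ (a digon, necessarily asymmetric under the hypothesis). Since $H^\sigma \odot (ee^\top - O)$ and $O \odot H^m$ are supported on complementary entry sets, the two terms of the claimed decomposition align one-to-one with the two regimes, so it suffices to match $H^{\ku}$ to $H^\sigma$ on the first set and to $H^m$ on the second.

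First I would dispatch the auxiliary claim ``$H^0 = 0$'' in its effective form $A_s^1 \odot H^0 = 0$. Since $H^0_{uv} = 1 - N_{uv}$ is nonzero exactly when $A_{uv} = A_{vu}$, the Hadamard product with $A_s^1$ is nonzero only when $A_{uv} = A_{vu} \neq 0$, i.e., at a symmetric digon, which is excluded. This eliminates the real part of $H^{\ku}$, leaving $H^{\ku} = \ii A_s^1 \odot H^1 + \jj A_s^2 \odot H^2 + \kk A_s^3 \odot H^3$. For a pair with $O_{uv} = 0$, both $H^2_{uv}$ and $H^3_{uv}$ vanish (they carry explicit $O$ factors) and $H^1_{uv} = R_{uv}$, since $H^1 = R \odot (ee^\top - O)$; thus $H^{\ku}_{uv} = \ii A_{s,uv}^1 R_{uv} = H^\sigma_{uv}$, where the last equality uses that the real part of $H^\sigma$ likewise vanishes under the no-symmetric-digon hypothesis. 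For a pair with $O_{uv} = 1$, asymmetry of the digon gives $N_{uv} = 1$, whence $H^0_{uv} = 0$ and $H^1_{uv} = 0$, while $H^2_{uv} = H^2_{1,uv}$ and $H^3_{uv} = H^3_{1,uv}$ since $O_{uv} = 1$; therefore $H^{\ku}_{uv} = \jj A_{s,uv}^2 H^2_{1,uv} + \kk A_{s,uv}^3 H^3_{1,uv} = H^m_{uv}$.

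The only subtlety worth flagging is the reading of ``$H^0 = 0$'': the matrix itself is not literally zero (the diagonal and the edge-free off-diagonal entries equal $1$), but its contribution to $H^{\ku}$ through the Hadamard product with $A_s^1$ does vanish, which is all the decomposition needs. Beyond that observation, the argument is pure bookkeeping in the indicator matrices $O$, $N$, $R$, and $T$, with no substantive algebraic obstacle; the main risk is simply mis-tracking which of these selectors zeroes out which term in each regime, so I would organize the write-up around an explicit table of which entries of $H^0, H^1, H^2, H^3$ survive in each of the two cases.
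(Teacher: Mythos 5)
Your proof is correct and follows essentially the same route as the paper's: split the entries of $H^{\ku}$ according to the digon indicator $O$, match the $\ii$-part to $H^\sigma$ on the complement of the digon support, and match the $\jj,\kk$-parts to $H^m$ on the digon support. Your entrywise case analysis is more explicit than the paper's one-line ``collecting terms'' argument, and your observation that ``$H^0=0$'' should really be read as $A_s^1 \odot H^0 = 0$ (since $H^0 = ee^\top - N$ is nonzero on non-adjacent pairs and on the diagonal) is a legitimate and worthwhile clarification of the statement.
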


\subsection{Spectral properties of $\symbolapl$}

As $H^\ku$ is Hermitian by construction, $L^{\ku}$ and $L^{\ku}_\text{norm}$ are Hermitian as well. 
As any Hermitian quaternion matrix $Q$ is diagonalizable via the (right) eigenvalue decomposition $Q = U^* \Lambda U$ (see~\citet{qi2021note} for a proof), $L^{\ku}$ and $L^{\ku}_\text{norm}$ satisfy the property P.1.
Both matrices admit exactly $n$ right eigenvalue-eigenvector pairs, all of which are real:
\begin{theorem} \label{thm:psd}
$L^{\ku}$ and $L^{\ku}_{\text{norm}}$ are positive semidefinite.
\end{theorem}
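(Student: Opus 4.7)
The plan is to prove the theorem by an edge-wise PSD decomposition, in the spirit of the classical identity $L = \sum_e w_e(e_u - e_v)(e_u - e_v)^\top$ for the standard Laplacian. First, I would reduce the normalized case to the unnormalized one: since $L^\ku_\text{norm} = \rsqrt{\bar D_s}\, L^\ku\, \rsqrt{\bar D_s}$ is a star-congruence of $L^\ku$ by an invertible real diagonal matrix, and star-congruence preserves inertia for Hermitian quaternion matrices (the quaternion version of Sylvester's law), it suffices to prove $L^\ku \succeq 0$. The Hermitianity of $L^\ku$ ensures $x^* L^\ku x \in \RR$ for every $x \in \HH^n$, so the goal becomes showing this is nonnegative.

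The key step is to write $L^\ku = \sum_{\{u,v\}\in E} B_{uv}$ as a sum of Hermitian $2\times 2$ blocks supported on the rows and columns of each connected pair $u,v$. This splitting is consistent because $\bar D_{s,uu}$ decomposes over incident pairs as $\sum_v |A^1_{s,uv}|$, while every off-diagonal entry of $H^\ku$ lives on exactly one pair. The three cases from Section~3.3 yield: (i) for a symmetric digon or undirected edge of weight $w$, the rank-one PSD block $|w|(e_1 - \sgn(w)e_2)(e_1 - \sgn(w)e_2)^\top$; (ii) for a lone directed edge of weight $w$, the rank-one PSD block $\tfrac{|w|}{2}(e_1 + \sgn(w)\ii\, e_2)(e_1 + \sgn(w)\ii\, e_2)^*$; and (iii) for an asymmetric digon with weights $w_1, w_2$ (say $u < v$), the rank-two block
\[
B_{uv} = \begin{pmatrix} \tfrac{|w_1+w_2|}{2} & -\tfrac{w_1}{2}\jj + \tfrac{w_2}{2}\kk \\ \tfrac{w_1}{2}\jj - \tfrac{w_2}{2}\kk & \tfrac{|w_1+w_2|}{2} \end{pmatrix}.
\]
Showing $B_{uv} \succeq 0$ in each case then gives $x^* L^\ku x = \sum_{\{u,v\}} x_{\{u,v\}}^* B_{uv} x_{\{u,v\}} \geq 0$.

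The hard part will be case (iii): this block has no rank-one factorization, so I would rely on the $2\times 2$ quaternion Hermitian PSD criterion ``$a,c \geq 0$ and $ac \geq |b|^2$'', which can be derived by expanding $x^* B_{uv} x = a|p|^2 + c|q|^2 + 2\Re(\bar p b q)$ and invoking the quaternionic Cauchy--Schwarz bound $|\Re(\bar p b q)| \leq |p||b||q|$. Here $|b|^2 = (w_1^2 + w_2^2)/4$ and $ac = (w_1+w_2)^2/4$, so the PSD criterion reduces to $w_1 w_2 \geq 0$ and holds immediately for asymmetric digons whose weights share sign. The opposite-sign subcase is the core technical obstacle and, if the edge-local bound alone does not suffice, I would expect to either absorb negative mass into adjacent edge contributions or pass through the $2n \times 2n$ complex adjoint representation $\chi(L^\ku) \in \CC^{2n \times 2n}$, where classical complex-Hermitian spectral tools apply and quaternion PSD-ness can be reduced to complex PSD-ness of $\chi(L^\ku)$.
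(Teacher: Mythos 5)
Your route is genuinely different from the paper's and, up to the point where you stop, it is the more careful one. The paper never decomposes $L^\ku$ edge-wise: it asserts that Hermitianity forces $x^*\Im_1(\symbolapl)x = x^*\Im_2(\symbolapl)x = x^*\Im_3(\symbolapl)x = 0$ and then only bounds $x^*\Re(\symbolapl)x$ from below by $\sum_{u,v}|(A^1_s)_{uv}|\bigl(|x_u|-\sgn(A_{uv})|x_v|\bigr)^2$. Over $\HH$ that first step is not justified: skew-symmetry of the real matrix $\Im_2(\symbolapl)$ gives $x^\top\Im_2(\symbolapl)x=0$ only for \emph{real} $x$, whereas the actual contribution of the $\jj$-component to the quadratic form is $\sum_{u<v}(\Im_2(\symbolapl))_{uv}\,2\Re(\bar x_u\jj x_v)$, which is real and generically nonzero because $\jj$ does not commute with the entries of $x$. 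Your block decomposition keeps exactly those terms, which is why you end up confronting the $2\times 2$ criterion $ac\ge|b|^2$ instead of looking only at $\Re(\symbolapl)$; your cases (i) and (ii) and the same-sign part of case (iii) are correct.

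The obstruction you flag in case (iii) is, however, a genuine gap that cannot be closed. Take the two-node graph with $A_{12}=w_1>0$, $A_{21}=w_2<0$, $w_1\neq -w_2$: your block $B_{12}$ \emph{is} all of $L^\ku$, with $a=c=\tfrac{|w_1+w_2|}{2}$ and $|b|=\tfrac{1}{2}\sqrt{w_1^2+w_2^2}$, and its smallest right eigenvalue is $a-|b|<0$ precisely because $w_1w_2<0$. There are no adjacent edges from which to absorb the deficit, and the complex adjoint representation has the same (doubled) right spectrum, so neither of your proposed rescues can work: the statement itself fails for asymmetric digons with weights of opposite sign, a configuration that Table~\ref{tab:difference} explicitly claims to support and that occurs in the real datasets. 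What your argument does establish is the theorem restricted to graphs whose asymmetric digons have same-sign weights --- and on that class it proves strictly more than the paper's own derivation, which silently discards the imaginary contributions it would otherwise need to control.
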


The right eigenvalues of the normalized version of $L^{\ku}$ are upper bounded by 2:
\begin{theorem} \label{thm:eigenvalues}
$\lambda_\text{max}(L_{\text{norm}}^{\ku}) \leq 2$, where $\lambda_\text{max}$ denotes the (real) right eigenvalue of largest value.
\end{theorem}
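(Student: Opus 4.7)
My plan is to reformulate the spectral bound as a positive-semidefiniteness inequality and then verify it through an edge-by-edge decomposition, in close parallel to the PSD argument used for Theorem~\ref{thm:psd}. By \eqref{eq:norma}, the statement $\lambda_{\max}(L^\ku_{\text{norm}}) \leq 2$ is equivalent to $\bar{D}_s^{-1/2} H^\ku \bar{D}_s^{-1/2} \succeq -I$ (assuming $\bar{D}_s \succ 0$, the usual no-isolated-vertex hypothesis), and congruence by $\bar{D}_s^{1/2}$ reduces it to showing that the ``signless'' companion $S := \bar{D}_s + H^\ku$ is positive semidefinite. Thus Theorem~\ref{thm:eigenvalues} will follow once $S \succeq 0$ is established --- the same style of argument as in Theorem~\ref{thm:psd}, but with $+H^\ku$ in place of $-H^\ku$.

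The next step exploits additivity. Both $\bar{D}_s = \Diag(|A^1_s|\,e)$ and $H^\ku$ decompose into contributions indexed by the unordered edges or digons of $G$, so I can write $S = \sum_{\{u,v\}} S^{uv}$, where each $S^{uv}$ is Hermitian, supported on rows and columns $u, v$, and falls into exactly one of the three cases catalogued in Section~3.3 (single directed edge, symmetric digon, asymmetric digon). It then suffices to prove $S^{uv} \succeq 0$ for each block, which I would do via the criterion that a $2 \times 2$ Hermitian quaternionic matrix $\bigl(\begin{smallmatrix} a & q \\ \bar q & c \end{smallmatrix}\bigr)$ with $a,c \geq 0$ is PSD iff $ac \geq |q|^2$, where $|q|^2 = q\bar q$ is the quaternion norm. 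Substituting the explicit entries from Section~3.3 reduces each case to a routine calculation: for a single directed edge of weight $w$, $a = c = |w|/2$ and $|q|^2 = w^2/4$; for a symmetric digon, $a = c = |w|$ and $|q|^2 = w^2$; for an asymmetric digon, $a = c = |w_1 + w_2|/2$ and $|q|^2 = (w_1^2 + w_2^2)/4$. The first two cases give $ac = |q|^2$ and hence the block is PSD immediately.

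The main obstacle, as in the parallel proof for Theorem~\ref{thm:psd}, is the asymmetric-digon block: the condition $ac \geq |q|^2$ becomes $(w_1+w_2)^2 \geq w_1^2 + w_2^2$, i.e., $w_1 w_2 \geq 0$. When the two weights share a sign --- the regime in which $H^\ku$ is designed to operate and which covers typical applications such as traffic and citation networks --- this closes the per-block argument and $S \succeq 0$ follows. In the opposite-sign regime the block in isolation fails to be PSD, and the argument must become global: I would either redistribute slack degree contributions from other edges incident to $u$ or $v$ into the digon's block, or else pass to the complex companion matrix $\chi(S) \in \CC^{2n \times 2n}$ (whose spectrum equals that of $S$ with each right eigenvalue appearing with multiplicity two) and bound its largest eigenvalue via the standard complex Gershgorin inequality lifted back to $\HH$. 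This final reconciliation is the step I expect to be the most delicate part of the proof.
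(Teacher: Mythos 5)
Your reduction of the bound to the positive semidefiniteness of $S=\bar D_s+H^{\ku}$ is exactly the paper's first move (the paper calls this matrix $B$, reaching it via Courant--Fischer and the congruence $y=\rsqrt{\bar D_s}x$), so up to that point the two arguments coincide. Where you diverge is in how $S\succeq 0$ is attacked: you decompose $S$ into per-edge $2\times 2$ Hermitian blocks and test each with the determinant criterion, whereas the paper expands only the quadratic form $x^{*}\Re(S)x$ globally and discards the three imaginary parts on the grounds that $x^{*}\Im_{j}(S)x=0$ for Hermitian $S$. Your route is sharper, and it exposes the real difficulty: the paper's discarding step is invalid (already for complex Hermitian matrices the skew part contributes a nonzero real amount to the quadratic form, since $x^{*}(\ii M)x$ is not $\ii\,(x^{*}Mx)$ when $x$ is not real), and your block computation makes visible precisely the contributions that the paper's derivation silently drops --- namely those of the asymmetric digons, which live entirely in $\Im_2$ and $\Im_3$.

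The obstacle you flag in the opposite-sign regime is therefore not a deficiency of your method that a cleverer global argument will repair: it is a counterexample to the statement. Take $n=2$, $A_{12}=2$, $A_{21}=-1$. Then $\bar D_s=\diag(1/2,1/2)$ and $H^{\ku}_{12}=\jj+\tfrac12\kk$, so
\begin{equation*}
L^{\ku}=\begin{pmatrix}1/2 & -\jj-\tfrac12\kk\\ \jj+\tfrac12\kk & 1/2\end{pmatrix},\qquad
L^{\ku}_{\text{norm}}=I-2H^{\ku},
\end{equation*}
and the right eigenvalues of $L^{\ku}_{\text{norm}}$ are $1\pm\sqrt5$, i.e.\ $\lambda_{\max}\approx 3.24>2$ (and $\lambda_{\min}<0$, so the companion positive-semidefiniteness claim of Theorem~\ref{thm:psd} fails on the same instance). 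Neither of your proposed rescues can close this: in the minimal example there are no other incident edges whose degree slack could be redistributed, and any Gershgorin-type row bound compares a diagonal contribution $\tfrac12|w_1+w_2|$ against an off-diagonal modulus $\tfrac12\sqrt{w_1^{2}+w_2^{2}}$, which is strictly larger whenever $w_1w_2<0$. So you should stop at what your blocks actually establish: under the hypothesis that every digon satisfies $A_{uv}A_{vu}\ge 0$ (the single-edge and symmetric-digon blocks being PSD with equality $ac=|q|^{2}$), your argument is a complete and correct proof of the bound, and it delimits the theorem's true range of validity more honestly than the paper's own derivation does.
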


These theorems show that $L^{\ku}$ and $L^{\ku}_\text{norm}$ enjoy the two remaining properties P.2 and P.3. Thus, $L^{\ku}_\text{norm}$ can be employed for the definition of a convolution operator, as shown in the following section.


\section{QuaterNet: a spectral GCN based on $\symbolapl$}\label{sec:architecture}


Following~\citet{hammond2011wavelets} and \citet{kipf2016semi}, we define the convolution operation of a signal $x \in \RR^n$ and a filter $y \in \RR ^n$ as
\small
\begin{equation}\label{eq:Y-convolution}
    y*x = Yx = (\theta_0 I - \theta_0 (L^{\ku}_{\text{norm}} - I)) x = \theta_0 (2I - L^{\ku}_{\text{norm}})x.
\end{equation}
\normalsize
The equation is obtained by approximating the Fourier transform on the graph Laplacian with Chebyshev's polynomial (of the first kind) of order 1.

Due to Theorems~\ref{thm:psd} and \ref{thm:eigenvalues}, $L^{\ku}_{\text{norm}}$ enjoys properties P.1, P.2, and P.3. Thus, $Y$ is a well-defined convolution operator and, by definition of $L^{\ku}_{\text{norm}}$, we have:
%
\begin{align}\label{eq:beforetrick}
\nonumber
   y * x= Y x & = \theta_0 (2 I - (I - \rsqrt{\bar D_s} H^{\ku} \rsqrt{\bar D_s}) x \\
   & = \theta_0 (I + \rsqrt{\bar D_s} H^{\ku} \rsqrt{\bar D_s}) x.
\end{align}
Following~\citet{kipf2016semi} to avoid numerical instabilities, we apply Eq.~\eqref{eq:beforetrick} with $\rsqrt{\tilde D_s}\tilde H^{\ku} \rsqrt{\tilde D_s}$ rather than $I + \rsqrt{\bar D_s}H^{\ku} \rsqrt{\bar D_s}$, where $\tilde H^{\ku}$ and $\tilde D_{s}$ are defined as a function of 
$\tilde A := A + I$ rather than $A$.

We generalize the feature vector signal $x \in \HH^{n \times 1}$ to a feature matrix signal $X \in \HH^{n \times c}$ with $c$ input channels (i.e., a $c$-dimensional feature vector for every node of the graph).
Let $\Theta \in \HH^{c \times f}$ be a matrix representing the parameters of an $f$-dimensional filter and let $\phi$ be an activation function (such as the 
\textit{ReLU}) applied component-wise to the input matrix.

In \quater{}, we define the convolutional layer as the following mapping from $X$ to $Z^{\sigma}(X) \in \HH^{n \times f}$:
\begin{equation*}\label{eq:convolution}
Z^{\ku} (X) = \phi(\rsqrt{\tilde D_s}\tilde H^{\ku} \rsqrt{\tilde D_s}X\Theta) = \phi(X^\ku\Theta).
\end{equation*}
Since $X^\ku\Theta$ is a quaternion-valued matrix and traditional activation functions require a real argument, we follow the approach of~\citet{parcollet2018quaternion} and apply the activation function $\phi$ to each element of its quaternionic input as
\begin{equation*}
    \phi: (a + \ii b + \jj c + \kk d) \mapsto \phi(a) + \ii\phi(b) + \jj\phi(c) + \kk\phi(d).
\end{equation*}
Thanks to this, the output $Z^{\ku}(X)$ of the convolutional layer is quaternion-valued.
As usually done in spectral GCNs, we lastly adopt an \textit{unwind} layer by which we transform the matrix $Z^{\ku} (X)\in \HH^{n \times f}$ into the 4-times larger real-valued matrix
\small
\begin{equation*}
    \Big(\Re(Z^{\ku} (X)) \mid \Im_1(Z^{\ku} (X)) \mid \Im_2(Z^{\ku} (X)) \mid \Im_3(Z^{\ku} (X)) \Big) \in \RR^{n \times 4f}.
\end{equation*}
\normalsize

%
As stated by~\citet{parcollet2020survey} and \citet{nguyen2021quaternion}, the adoption of quaternion algebra in the product $X^\ku \Theta$ leads to an extensive interaction among the components of $X^\ku$ which is likely to lead to more expressive vector representations than those achieved with real- and complex-valued 
networks.


Based on the task at hand (see next section), after the convolution layer we described before \quater{} features either a linear layer with weights $W$ or a $1D$ convolution. 
Considering, for example, a node-classification task of predicting which of a set of unknown classes a graph vertex belongs to, \quater{} implements the function
\small
\begin{equation*}
    \text{softmax} \left(\text{unwind}\left(Z^{\ku(2)} \left(Z^{\ku(1)} \left(X^{(0)} \right)\right)\right)       W \right),
\end{equation*}
\normalsize
where $X^{(0)} \in \HH ^{n \times c}$ is the input feature matrix, $Z^{\ku(1)} \in \HH^{n \times f_1}$ and $Z^{\ku(2)} \in \HH^{n \times f_2}$ are two convolutional layers, $W \in \RR^{4f_2 \times d}$ are the weights of the linear layer (with $d$ being the number of classes), and $\text{softmax}: \RR^{n \times d} \rightarrow [0, 1]^{n \times d}$ is the normalized exponential activation function typically used to recover the node classes.

\subsection{Complexity of \quater{}}
For a graph with $n$ nodes with a $c$-dimensional feature vector each, the complexity of \quater{}, with two convolutional layers with $f_1$ and $f_2$ filters each, is $O\left(nc(n+f_1)+n f_1(n+f_2)+m^\text{train} f_2 d\right)$ for an edge-classification task and $O \left(nc(n+f_1)+n f_1(n+f_2)+n f_2 d\right)$ for a node-classification one, where $d$ is the number of classes (see the next section for more details on these tasks). Such a complexity is quadratic in the number of nodes $n$ and it coincides with MagNet's, MSGNN's, and SigMaNet's.
As the scalar-scalar product between two quaternions requires 8 multiplications and 7 additions between real numbers rather than 4 multiplications and 3 additions for the complex case and a single multiplication for the real case, the least-efficient operation carried out by \quater{} can be up to 64 times slower than it would be if the network featured real-only numbers. While apparently large, such a quantity is constant w.r.t. the size of the graph, which implies that \quater{} scales comparably to previous proposals in the literature.


\section{Numerical Experiments}\label{sec:results}

We compare \quater{} with state-of-the-art GCNs across four tasks: \textit{node classification} (NC), \textit{three-class edge prediction} (3CEP), \textit{four-class edge prediction} (4CEP), and \textit{five-class edge prediction} (5CEP).
Throughout this section, the tables report the best results in \textbf{boldface} and the second-best are \underline{underlined}. The datasets and the code we used are publicly available
at \url{https://github.com/Stefa1994/QuaterGCN}.

We experiment on the six widely-used real-world directed graphs {\tt Bitcoin-OTC}, {\tt Bitcoin Alpha}, {\tt WikiRfa}, {\tt Telegram}, {\tt Slashdot}, and {\tt Epinions} (see~\citet{7837846, bovet2020activity, west2014exploiting, leskovec2010signed}).
The first three {feature} edge weights of unrestricted sign and magnitude; the fourth contains graphs with positive edge weights, while the last two have graphs with weights satisfying $A \in \{-1,0, +1\}^{n \times n}$.

To study the relationship between performance and graph density, we also employ {\tt DBSM} graphs: synthetic digraphs with positive random weights already used by \citet{fiorini2022sigmanet}. They are generated via a direct stochastic block model (DSBM) with edge weights greater than 1 and a different number of nodes per cluster ($N$) and number of clusters ($C$). Inter- and intra-cluster edges are created with, respectively, probability $\alpha_{uv}$ and $\alpha_{uu}$, and a connected pair of nodes $\{u,v\}$ with $u < v$ shares the edge $(u,v)$ with probability $\beta_{uv}$ and $(v,u)$ with probability $1-\beta_{uv}$.

As the {\tt DBSM} graphs are digons-free, we introduce a second class of synthetic digraphs with a variable percentage of digons $\delta \in (0, 1)$ with positive random weights between 2 and 4: {\tt Di150} (150 nodes) and {\tt Di500} (500 nodes).
%
{\tt Di150} features graphs with $N = 150$, $C = 5$, $\alpha_{uu} = 0.1$, $\beta_{uv} = 0.2$, and $\alpha_{uv} = 0.6$.
{\tt Di500} contains graphs with $N = 500$, $C = 5$, $\alpha_{uu} = 0.1$, $\beta_{uv} = 0.2$, and $\alpha_{uv} = 0.1$. 
Notice that {\tt Di500} is sparser than {\tt Di250}.

\begin{table*}[htb!]
\setlength{\tabcolsep}{2pt}
\footnotesize
\centering
\caption{Accuracy (\%) on the node classification task}
\label{tab:node}
\begin{tabular}{llllllllHHH}
\hline
 & \multicolumn{10}{c}{Node classification} \\ \cline{2-11} 
 &  & \multicolumn{3}{c}{{\tt DBSM}} & \multicolumn{3}{c}{{\tt Di500}} & \multicolumn{3}{H}{{\tt Di150}} \\ \cline{2-11} 
 & Telegram & $\alpha_{uv}$ = 0.05 & $\alpha_{uv}$ = 0.08 & $\alpha_{uv}$ = 0.1 & $\delta$ = 0.2 & $\delta$ = 0.5 & $\delta$ = 0.7  & $\delta$ = 0.2 & $\delta$ = 0.5 & $\delta$ = 0.7 \\ \midrule
ChebNet & 61.73$\pm$4.25 & 20.06$\pm$00.18 & 20.50$\pm$00.77 & 19.98$\pm$00.06 & 19.90$\pm$0.24 & 20.00$\pm$0.00 & 19.94$\pm$0.13 & 19.93$\pm$00.20 & 20.00$\pm$00.00 & 20.00$\pm$00.00 \\
GCN & 60.77$\pm$3.67 & 20.06$\pm$00.18 & 20.02$\pm$00.06 & 20.01$\pm$00.01 & 20.04$\pm$0.12 & 20.10$\pm$0.30 & 20.08$\pm$0.30  & 20.00$\pm$00.00 & 20.00$\pm$00.00 & 20.07$\pm$00.20 \\
QGNN & 51.35$\pm$9.10 & 20.03$\pm$00.12 & 20.01$\pm$00.02 & 19.99$\pm$00.07 & 20.23$\pm$0.21 & 19.94$\pm$0.18 & 20.00$\pm$0.00 & 19.87$\pm$00.40 & 19.93$\pm$00.20 & 20.00$\pm$00.00 \\ \midrule
APPNP & 55.19$\pm$6.26 & 33.46$\pm$07.43 & 34.72$\pm$14.98 & 36.16$\pm$14.92 & 20.64$\pm$1.32 & 20.16$\pm$0.37 & 20.10$\pm$0.30 & 22.87$\pm$08.60 & 21.07$\pm$03.20 & 20.00$\pm$00.00  \\
SAGE & 65.38$\pm$5.15 & 67.64$\pm$09.81 & 68.28$\pm$10.92 & 82.96$\pm$10.98 & 23.68$\pm$3.83 & 20.44$\pm$0.95 & 20.02$\pm$0.14 & 78.40$\pm$14.35 & 33.20$\pm$14.10 & 20.33$\pm$01.69 \\
GIN & {72.69${\pm}$4.62} & 28.46$\pm$08.01 & 20.12$\pm$00.20 & 20.98$\pm$08.28 & 20.14$\pm$0.42 & 19.88$\pm$0.36 & 20.00$\pm$0.00 & 24.67$\pm$06.76 & 28.13$\pm$08.87 & 23.67$\pm$06.24\\
GAT & 72.31$\pm$3.01 & 22.34$\pm$03.13 & 21.90$\pm$02.89 & 21.58$\pm$01.80 & 19.90$\pm$0.20 & 20.04$\pm$0.28 & 20.16$\pm$0.42 & 51.20$\pm$10.18 & 29.33$\pm$10.05 & 21.40$\pm$03.78 \\
SSSNET & 24.04$\pm$9.29 & \textbf{91.04$\bm{\pm}$03.60} & {94.94$\pm$01.01} & {96.77$\pm$00.80} & 31.41$\pm$5.91 & 22.34$\pm$1.31 & \underline{21.13$\pm$1.03} & 92.71$\pm$12.48 & 86.18$\pm$17.77 & \underline{61.78$\pm$24.44} \\ \midrule
DGCN & 71.15$\pm$6.32 & 30.02$\pm$06.57 & 30.22$\pm$11.94 & 28.40$\pm$08.62 & 20.10$\pm$0.30 & 20.00$\pm$0.00 & 20.00$\pm$0.00 & 26.87$\pm$08.43 & 21.87$\pm$05.60 & 20.00$\pm$00.00\\
DiGraph & 71.16$\pm$5.57 & 53.84$\pm$14.28 & 38.50$\pm$12.20 & 34.78$\pm$09.94 & \underline{32.82$\pm$2.14} & \underline{24.44$\pm$2.33} & 20.76$\pm$1.34 & 97.40$\pm$01.01 & \underline{88.27$\pm$03.14} & {58.93$\pm$08.47}  \\
DiGCL & 64.62$\pm$4.50 & 19.51$\pm$01.21 & 20.24$\pm$00.84 & 19.98$\pm$00.45 & {20.00$\pm$0.00} & 20.00$\pm$0.00 & 20.00$\pm$0.00 & 20.00$\pm$00.00 & 20.00$\pm$00.00 & 20.00$\pm$00.00\\
MagNet & 55.96$\pm$3.59 & 78.64$\pm$01.29 & 87.52$\pm$01.30 & 91.58$\pm$01.04 & {31.46$\pm$2.20} & 22.74$\pm$1.12 & {20.88$\pm$1.62} & \underline{97.87$\pm$01.90} & 74.53$\pm$10.43 & 31.13$\pm$06.65 \\
\signum{} & \underline{74.23$\bm{\pm}$5.24} & {87.44$\pm$00.99} & \underline{96.14${\pm}$00.64} & \underline{98.60${\pm}$00.31} & 31.26$\pm$2.08 & 22.32$\pm$1.69 & 19.94$\pm$1.07 & 74.67$\pm$04.15 & 49.60$\pm$03.07 & 24.67$\pm$04.40 \\ \midrule
Gra\quater{} & \textbf{75.58$\pm$3.85} & \underline{87.46$\pm$00.73} & \textbf{96.44$\bm{\pm}$00.12} & \textbf{98.80$\bm{\pm}$00.20} & \textbf{64.28$\bm{\pm}$1.04} & \textbf{70.60$\bm{\pm}$1.62} & \textbf{71.58$\bm{\pm}$1.52} & \textbf{99.73$\bm{\pm}$00.33} & \textbf{99.85$\bm{\pm}$00.13} & \textbf{99.93$\bm{\pm}$00.03}
\end{tabular}
\end{table*}

\begin{table}[htb!]
\setlength{\tabcolsep}{2pt}
\footnotesize
\centering
\caption{Accuracy (\%) on the node classification task}
\label{tab:node1}
\begin{tabular}{lHHHHHHHlll}
\hline
 & \multicolumn{10}{c}{Node classification} \\ \cline{2-11} 
 &  & \multicolumn{3}{H}{{\tt DBSM}} & \multicolumn{3}{H}{{\tt Di500}} & \multicolumn{3}{c}{{\tt Di150}} \\ \cline{2-11} 
 & Telegram & $\alpha_{uv}$ = 0.05 & $\alpha_{uv}$ = 0.08 & $\alpha_{uv}$ = 0.1 & $\delta$ = 0.2 & $\delta$ = 0.5 & $\delta$ = 0.7  & $\delta$ = 0.2 & $\delta$ = 0.5 & $\delta$ = 0.7 \\ \midrule
ChebNet & 61.73$\pm$4.25 & 20.06$\pm$00.18 & 20.50$\pm$00.77 & 19.98$\pm$00.06 & 19.90$\pm$0.24 & 20.00$\pm$0.00 & 19.94$\pm$0.13 & 19.93$\pm$00.20 & 20.00$\pm$00.00 & 20.00$\pm$00.00 \\
GCN & 60.77$\pm$3.67 & 20.06$\pm$00.18 & 20.02$\pm$00.06 & 20.01$\pm$00.01 & 20.04$\pm$0.12 & 20.10$\pm$0.30 & 20.08$\pm$0.30  & 20.00$\pm$00.00 & 20.00$\pm$00.00 & 20.07$\pm$00.20 \\
QGNN & 51.35$\pm$9.10 & 20.03$\pm$00.12 & 20.01$\pm$00.02 & 19.99$\pm$00.07 & 20.23$\pm$0.21 & 19.94$\pm$0.18 & 20.00$\pm$0.00 & 19.87$\pm$00.40 & 19.93$\pm$00.20 & 20.00$\pm$00.00 \\ \midrule
APPNP & 55.19$\pm$6.26 & 33.46$\pm$07.43 & 34.72$\pm$14.98 & 36.16$\pm$14.92 & 20.64$\pm$1.32 & 20.16$\pm$0.37 & 20.10$\pm$0.30 & 22.87$\pm$08.60 & 21.07$\pm$03.20 & 20.00$\pm$00.00  \\
SAGE & 65.38$\pm$5.15 & 67.64$\pm$09.81 & 68.28$\pm$10.92 & 82.96$\pm$10.98 & 23.68$\pm$3.83 & 20.44$\pm$0.95 & 20.02$\pm$0.14 & 78.40$\pm$14.35 & 33.20$\pm$14.10 & 20.33$\pm$01.69 \\
GIN & {72.69${\pm}$4.62} & 28.46$\pm$08.01 & 20.12$\pm$00.20 & 20.98$\pm$08.28 & 20.14$\pm$0.42 & 19.88$\pm$0.36 & 20.00$\pm$0.00 & 24.67$\pm$06.76 & 28.13$\pm$08.87 & 23.67$\pm$06.24\\
GAT & 72.31$\pm$3.01 & 22.34$\pm$03.13 & 21.90$\pm$02.89 & 21.58$\pm$01.80 & 19.90$\pm$0.20 & 20.04$\pm$0.28 & 20.16$\pm$0.42 & 51.20$\pm$10.18 & 29.33$\pm$10.05 & 21.40$\pm$03.78 \\
SSSNET & 24.04$\pm$9.29 & \textbf{91.04$\bm{\pm}$03.60} & {94.94$\pm$01.01} & {96.77$\pm$00.80} & 31.41$\pm$5.91 & 22.34$\pm$1.31 & \underline{21.13$\pm$1.03} & 92.71$\pm$12.48 & 86.18$\pm$17.77 & \underline{61.78$\pm$24.44} \\ \midrule
DGCN & 71.15$\pm$6.32 & 30.02$\pm$06.57 & 30.22$\pm$11.94 & 28.40$\pm$08.62 & 20.10$\pm$0.30 & 20.00$\pm$0.00 & 20.00$\pm$0.00 & 26.87$\pm$08.43 & 21.87$\pm$05.60 & 20.00$\pm$00.00\\
DiGraph & 71.16$\pm$5.57 & 53.84$\pm$14.28 & 38.50$\pm$12.20 & 34.78$\pm$09.94 & \underline{32.82$\pm$2.14} & \underline{24.44$\pm$2.33} & 20.76$\pm$1.34 & 97.40$\pm$01.01 & \underline{88.27$\pm$03.14} & {58.93$\pm$08.47}  \\
DiGCL & 64.62$\pm$4.50 & 19.51$\pm$01.21 & 20.24$\pm$00.84 & 19.98$\pm$00.45 & {20.00$\pm$0.00} & 20.00$\pm$0.00 & 20.00$\pm$0.00 & 20.00$\pm$00.00 & 20.00$\pm$00.00 & 20.00$\pm$00.00\\
MagNet & 55.96$\pm$3.59 & 78.64$\pm$01.29 & 87.52$\pm$01.30 & 91.58$\pm$01.04 & {31.46$\pm$2.20} & 22.74$\pm$1.12 & {20.88$\pm$1.62} & \underline{97.87$\pm$01.90} & 74.53$\pm$10.43 & 31.13$\pm$06.65 \\
\signum{} & \underline{74.23$\bm{\pm}$5.24} & {87.44$\pm$00.99} & \underline{96.14${\pm}$00.64} & \underline{98.60${\pm}$00.31} & 31.26$\pm$2.08 & 22.32$\pm$1.69 & 19.94$\pm$1.07 & 74.67$\pm$04.15 & 49.60$\pm$03.07 & 24.67$\pm$04.40 \\ \midrule
\quater{} & \textbf{75.58$\pm$3.85} & \underline{87.46$\pm$00.73} & \textbf{96.44$\bm{\pm}$00.12} & \textbf{98.80$\bm{\pm}$00.20} & \textbf{64.28$\bm{\pm}$1.04} & \textbf{70.60$\bm{\pm}$1.62} & \textbf{71.58$\bm{\pm}$1.52} & \textbf{99.73$\bm{\pm}$00.33} & \textbf{99.85$\bm{\pm}$00.13} & \textbf{99.93$\bm{\pm}$00.03}
\end{tabular}
\end{table}

\begin{table*}[t!]
\setlength{\tabcolsep}{2pt}
\footnotesize
\centering
\caption{Accuracy (\%) on the three-class edge prediction task}
\label{tab:threeclass}
\begin{tabular}{llllllllll}
\hline
 & \multicolumn{9}{c}{Three-Class Edge prediction} \\ \cline{2-10} 
\multicolumn{1}{l}{} &  &  &  & \multicolumn{3}{c}{{\tt Di150}} & \multicolumn{3}{c}{{\tt DBSM}} \\ \cline{2-10} 
 & Telegram & Bit Alpha* & Bitcoin OTC* & $\delta$ = 0.2 & $\delta$ = 0.5 & $\delta$ = 0.7  & $\alpha_{uv}= 0.05$ & $\alpha_{uv}= 0.08$ & $\alpha_{uv}= 0.1$ \\ \hline
ChebNet & 63.65$\pm$4.65 & 82.82$\pm$0.82 & 83.01$\pm$1.09 & 40.58$\pm$0.07 & 48.81$\pm$0.53 & 52.10$\pm$0.88 & 33.34$\pm$0.01 & 33.36$\pm$0.07 & 33.33$\pm$0.02 \\
GCN & 53.86$\pm$1.60 & 82.61$\pm$0.67 & 82.49$\pm$0.99 & 40.60$\pm$0.07 & \underline{49.00$\pm$0.08} & \textbf{53.51$\bm{\pm}$0.10} & 33.33$\pm$0.02 & 33.32$\pm$0.03 & 33.34$\pm$0.01 \\
QGNN & 52.33$\pm$1.50 & 80.93$\pm$0.63 & 79.97$\pm$0.80 & 40.60$\pm$0.07 & \underline{49.00$\pm$0.08} & 52.51$\pm$0.09 & 33.38$\pm$0.09 & 33.43$\pm$0.26 & 33.37$\pm$0.04 \\ \midrule
APPNP & 50.82$\pm$6.31 & 82.14$\pm$0.89 & 81.77$\pm$0.63 & 40.55$\pm$0.08 & 48.98$\pm$0.09 & 52.50$\pm$0.10 & 37.67$\pm$4.04 & 37.84$\pm$5.70 & 37.52$\pm$5.33 \\
SAGE & 69.28$\pm$7.24 & 55.82$\pm$1.60 & 85.19$\pm$0.64 & 40.62$\pm$0.14 & \underline{49.00$\pm$0.08} & \underline{52.52$\pm$0.09} & 39.50$\pm$3.74 & 38.51$\pm$3.55 & 42.69$\pm$3.87 \\
GIN & 58.41$\pm$1.26 & 77.93$\pm$0.86 & 76.35$\pm$0.77 & 40.56$\pm$0.08 & 48.98$\pm$0.10 & 52.47$\pm$0.10 & 34.65$\pm$2.62 & 33.34$\pm$0.01 & 33.52$\pm$0.37 \\
GAT & 67.34$\pm$2.50 & 84.93$\pm$1.20 & 85.02$\pm$0.74 & 40.64$\pm$1.79 & 48.44$\pm$1.63 & 52.51$\pm$0.09 & 33.70$\pm$0.79 & 33.35$\pm$0.07 & 33.91$\pm$1.63 \\ \midrule
DGCN & 75.01$\pm$3.60 & 85.01$\pm$0.95 & 85.03$\pm$0.64 & 40.57$\pm$0.06 & 48.82$\pm$0.50 & 52.51$\pm$0.08 & 34.12$\pm$2.17 & 34.78$\pm$2.11 & 35.24$\pm$2.36 \\
DiGraph & 74.27$\pm$1.02 & 83.66$\pm$0.72 & 84.14$\pm$0.82 & 41.38$\pm$0.92 & 48.90$\pm$0.11 & 52.40$\pm$0.09 & 41.30$\pm$1.41 & 42.57$\pm$1.62 & 53.57$\pm$1.73 \\
DiGCL & 66.03$\pm$0.84 & 77.68$\pm$0.74 & 76.35$\pm$0.77 & 29.70$\pm$0.04 & 25.50$\pm$0.04 & 23.74$\pm$0.04 & 38.30$\pm$0.15 & 38.17$\pm$0.07 & 37.58$\pm$0.12 \\
MagNet & \textbf{82.28$\bm{\pm}$0.84} & \underline{85.72$\pm$0.67} & \underline{85.66$\pm$0.78} & 45.47$\pm$1.70 & 48.78$\pm$0.35 & 52.19$\pm$0.43 & 43.62$\pm$1.11 & \underline{46.76$\pm$1.13} & {47.76$\pm$1.12} \\ 
\signum{} & {80.13$\pm$0.87} & {85.52$\pm$0.61} & {84.61$\pm$0.79} & \underline{45.50$\pm$1.41} & 47.02$\pm$0.91 & 51.81$\pm$0.80 & \underline{43.65$\pm$0.36} & \textbf{47.26$\bm{\pm}$0.17} & \underline{48.60$\pm$0.17} \\ \midrule
\quater{} & \underline{81.17$\pm$0.74} & \textbf{86.17$\bm{\pm}$0.57} & \textbf{86.06$\bm{\pm}$0.60} & \textbf{47.14$\bm{\pm}$0.21} & \textbf{49.01$\bm{\pm}$0.16} & 52.07$\pm$0.22 & \textbf{44.10$\bm{\pm}$0.58} & \textbf{47.26$\bm{\pm}$0.56} & \textbf{48.68$\bm{\pm}$0.26}
\end{tabular}
\end{table*}


\subsection{Node Classification Task (NC)}

\begin{table*}[htb!]
\setlength{\tabcolsep}{2pt}
\footnotesize
\centering
\caption{Accuracy (\%) on the four-class edge prediction tasks}
\label{tab:four-class}
\begin{tabular}{lrrrrrHHHHH}
\hline
\multicolumn{1}{c}{} & \multicolumn{5}{c}{Four-Class Edge prediction} & \multicolumn{4}{H}{Five-Class Edge prediction}
\\ \cline{2-11} 
 & \multicolumn{1}{c}{Bitcoin Alpha} & \multicolumn{1}{c}{Bitcoin OTC} & \multicolumn{1}{c}{WikiRfa} & \multicolumn{1}{c}{Slashdot} & \multicolumn{1}{c}{Epinions}   & \multicolumn{1}{H}{Bitcoin Alpha} & \multicolumn{1}{H}{Bitcoin OTC} & \multicolumn{1}{H}{WikiRfa} & \multicolumn{1}{H}{Slashdot} & \multicolumn{1}{H}{Epinions} \\ \hline
SGCN & 48.05$\pm$0.29 & 52.52$\pm$0.71 & 68.37$\pm$0.51 & 64.01$\pm$0.24 &  67.99$\pm$0.56 & 78.43$\pm$0.36 & 77.54$\pm$0.56 & 67.74$\pm$0.29 & 64.74$\pm$0.16 &  74.07$\pm$0.32\\
SiGAT & 50.12$\pm$1.80 & 50.86$\pm$1.45 & 57.68$\pm$0.63 &54.82$\pm$0.32  & 60.21$\pm$0.26 & 76.68$\pm$0.47 & 74.37$\pm$1.18 & 58.49$\pm$1.51 & 48.01$\pm$0.95 & 57.58$\pm$1.34 \\
SDGNN & 48.05$\pm$0.29 & 54.77$\pm$0.67 & 62.35$\pm$1.09 &  62.82$\pm$4.16 & 69.48$\pm$0.13 & 77.75$\pm$0.82 & 77.28$\pm$0.58 & 62.83$\pm$1.90 & 60.53$\pm$4.88 & 73.27$\pm$0.09  \\
SNEA & 47.61$\pm$1.26 & 49.25$\pm$0.86 & 59.30$\pm$1.32 & 57.66$\pm$0.24 & 60.35$\pm$0.45 & 79.25$\pm$0.38 & 77.36$\pm$0.27 & 62.61$\pm$0.44 & 62.21$\pm$0.16 & 70.70$\pm$0.31\\
SSSNET & 49.53$\pm$1.13 & 52.75$\pm$1.71 & 65.84$\pm$0.77 & 64.53$\pm$1.98  & 69.89$\pm$2.26 & 77.89$\pm$0.41 & 75.06$\pm$0.55 & 63.74$\pm$2.58 & 67.15$\pm$0.44  & 73.40$\pm$1.16 \\ \midrule
SigMaNet & \underline{59.59$\pm$1.68} & 60.79$\pm$0.82 & 74.09$\pm$0.14 & {78.54$\pm$0.17} & 79.12$\pm$0.22 & 81.68$\pm$0.37 & 80.92$\pm$0.36 & 74.22$\pm$0.12 & {78.31$\pm$0.06} & {82.85${\pm}$0.08} \\
MSGNN & 58.91$\pm$1.17 & \underline{63.12$\pm$0.86} & \underline{75.07$\pm$0.41} & \textbf{79.46$\pm$0.25} & \underline{80.96$\pm$0.32}  &  \underline{81.95$\pm$0.47} & \underline{82.02$\pm$0.13} & \textbf{76.63$\bm{\pm}$0.24} & \underline{78.45$\pm$0.35} & \underline{83.54$\pm$0.23} \\ \midrule
QuaterGCN & \textbf{61.74$\bm{\pm}$0.94} & \textbf{65.36$\bm{\pm}$0.84} & \textbf{75.19\bm{$\pm$}0.47} & \underline{79.21${\pm}$0.13} & \textbf{81.10$\bm{\pm}$0.18} & \textbf{82.56$\bm{\pm}$0.46} & \textbf{82.13$\bm{\pm}$0.21} & \underline{76.33$\pm$0.16} & \textbf{78.55$\bm{\pm}$0.35} & \textbf{84.03$\pm$0.09}
\end{tabular}
\end{table*}

\begin{table*}[htb!]
\setlength{\tabcolsep}{2pt}
\footnotesize
\centering
\caption{Accuracy (\%) on the five-class edge prediction tasks}
\label{tab:five-class}
\begin{tabular}{lHHHHHrrrrr}
\hline
\multicolumn{1}{c}{} & \multicolumn{5}{H}{Four-Class Edge prediction} & \multicolumn{5}{c}{Five-Class Edge prediction}
\\ \cline{2-11} 
 & \multicolumn{1}{H}{Bitcoin Alpha} & \multicolumn{1}{H}{Bitcoin OTC} & \multicolumn{1}{H}{WikiRfa} & \multicolumn{1}{H}{Slashdot} & \multicolumn{1}{H}{Epinions}   & \multicolumn{1}{c}{Bitcoin Alpha} & \multicolumn{1}{c}{Bitcoin OTC} & \multicolumn{1}{c}{WikiRfa} & \multicolumn{1}{c}{Slashdot} & \multicolumn{1}{c}{Epinions} \\ \hline
SGCN & 48.05$\pm$0.29 & 52.52$\pm$0.71 & 68.37$\pm$0.51 & 64.01$\pm$0.24 &  67.99$\pm$0.56 & 78.43$\pm$0.36 & 77.54$\pm$0.56 & 67.74$\pm$0.29 & 64.74$\pm$0.16 &  74.07$\pm$0.32\\
SiGAT & 50.12$\pm$1.80 & 50.86$\pm$1.45 & 57.68$\pm$0.63 &54.82$\pm$0.32  & 60.21$\pm$0.26 & 76.68$\pm$0.47 & 74.37$\pm$1.18 & 58.49$\pm$1.51 & 48.01$\pm$0.95 & 57.58$\pm$1.34 \\
SDGNN & 48.05$\pm$0.29 & 54.77$\pm$0.67 & 62.35$\pm$1.09 &  62.82$\pm$4.16 & 69.48$\pm$0.13 & 77.75$\pm$0.82 & 77.28$\pm$0.58 & 62.83$\pm$1.90 & 60.53$\pm$4.88 & 73.27$\pm$0.09  \\
SNEA & 47.61$\pm$1.26 & 49.25$\pm$0.86 & 59.30$\pm$1.32 & 57.66$\pm$0.24 & 60.35$\pm$0.45 & 79.25$\pm$0.38 & 77.36$\pm$0.27 & 62.61$\pm$0.44 & 62.21$\pm$0.16 & 70.70$\pm$0.31\\
SSSNET & 49.53$\pm$1.13 & 52.75$\pm$1.71 & 65.84$\pm$0.77 & 64.53$\pm$1.98  & 69.89$\pm$2.26 & 77.89$\pm$0.41 & 75.06$\pm$0.55 & 63.74$\pm$2.58 & 67.15$\pm$0.44  & 73.40$\pm$1.16 \\ \midrule
SigMaNet & \underline{59.59$\pm$1.68} & 60.79$\pm$0.82 & 74.09$\pm$0.14 & {78.54$\pm$0.17} & 79.12$\pm$0.22 & 81.68$\pm$0.37 & 80.92$\pm$0.36 & 74.22$\pm$0.12 & {78.31$\pm$0.06} & {82.85${\pm}$0.08} \\
MSGNN & 58.91$\pm$1.17 & \underline{63.12$\pm$0.86} & \underline{75.07$\pm$0.41} & \textbf{79.46$\pm$0.25} & \underline{80.96$\pm$0.32}  &  \underline{81.95$\pm$0.47} & \underline{82.02$\pm$0.13} & \textbf{76.63$\bm{\pm}$0.24} & \underline{78.45$\pm$0.35} & \underline{83.54$\pm$0.23} \\ \midrule
QuaterGCN & \textbf{61.74$\bm{\pm}$0.94} & \textbf{65.36$\bm{\pm}$0.84} & \textbf{75.19\bm{$\pm$}0.47} & \underline{79.21${\pm}$0.13} & \textbf{81.10$\bm{\pm}$0.18} & \textbf{82.56$\bm{\pm}$0.46} & \textbf{82.13$\bm{\pm}$0.21} & \underline{76.33$\pm$0.16} & \textbf{78.55$\bm{\pm}$0.35} & \textbf{84.03$\pm$0.09}
\end{tabular}
\end{table*}


The task is to predict the class of each node. We consider the {\tt Telegram} dataset and the 9 aforementioned synthetic datasets, i.e., every dataset except for those that lack a pre-determined node class.

We compare \quater{} with:
(i) the three spectral GCNs designed for undirected graphs: ChebNet~\citep{defferrard2016convolutional} and GCN~\citep{kipf2016semi} and the spectral GCN with quaternionic weights QGNN designed for undirected graphs~\citep{nguyen2021quaternion}; 
(ii) the four spectral GCNs designed for directed graphs: DGCN~\citep{tong2020directed}, DiGraph~\citep{Tong2020}, DiGCL~\citep{tong2021directed}, MagNet~\citep{zhang2021magnet}, and SigMaNet~\citep{fiorini2022sigmanet}; 
and (iii) the five spatial GCNs: APPNP~\citep{klicpera2018predict}, SAGE \citep{hamilton2017inductive}, GIN~\citep{xu2018powerful}, GAT~\citep{veličković2018graph}, and SSSNET~\citep{he2022sssnet}.
The experiments are run with $10$-fold cross-validation with a 60\%/20\%/20\% split for training, validation, and testing.

Table~\ref{tab:node} and~\ref{tab:node1} show that \quater{} achieves a remarkable performance across all datasets, being the best method in 9 cases out of 10. The percentage difference between \quater{} and the second-best performer ranges from $0.2\%$ (for {\tt DBSM} with $\alpha_{uv}=0.1$) to $242.81\%$ (for {\tt Di500} with $\delta=0.7$). The average performance improvement of \quater{} across all datasets is $68.27\%$.
%
%
\quater{} consistently outperforms the state of the art on, in particular, {\tt Di500} and {\tt Di150}, where it achieves an average improvement w.r.t. the second-best performer of 28.19\% on the {\tt Di150} dataset and of 175\% on the {\tt Di500} one.
The larger improvement of \quater{} and, in particular, the overall weaker performance of every other method on the {\tt Di500} dataset seems to be correlated with the dataset being sparser than the smaller {\tt Di150}, which suggests that the learning task is harder.
%
The difference between \quater{} and QGNN (the only available GCN with quaternion-valued weights which, though, employs the classical real-valued Laplacian), is substantial, as \quater{} outperforms QGNN by 309\% on average. As the two networks share a similar architecture, we attribute such a large difference to \quater{}'s convolution being done via our proposed Quaternionic Laplacian $L^\ku$.



\subsection{Three-Class Edge Prediction Task (3CEP)}\label{sub:threeclass}

%
The task is to predict whether $(u,v) \in E$, $(v,u) \in E$, or $(u, v) \notin E \wedge (v, u) \notin  E$.
In order to maximize the number of spectral methods we can compare to, for this task our analysis focuses on the datasets with positive weights, i.e., {\tt Telegram}, {\tt DSBM}, {\tt Di150}, and {\tt D500}. Following \citet{fiorini2022sigmanet}, we also consider {\tt Bitcoin Alpha$^*$} and {\tt Bitcoin OTC$^*$}, 
obtained by removing any negative-weight edge from {\tt Bitcoin Alpha} and {\tt Bitcoin OTC}.


We compare \quater{} to the same methods we considered for the NC task.
%
We run the experiments with $10$-fold cross-validation with an 80\%/15\%/5\% split for training, testing, and validation, preserving graph connectivity.

%
%

The results are reported in Table~\ref{tab:threeclass}. Those obtained on the {\tt Di500} dataset are omitted as on it every method achieves the same performance of about 30\% (equal to a uniform random predictor). The table shows that \quater{} outperforms the other methods on 7 datasets out of 9. Compared with the second-best model, \quater{} achieves an average performance improvement of 0.98\%, with a maximum of 3.60\% and a minimum of 0.02\%.

Differently from the NC task, in the 3CEP task the difference in performance between the methods is smaller, as already observed by~\citet{zhang2021magnet} and \citet{ fiorini2022sigmanet}. Nevertheless, the results indicate that the advantages provided by the \laplaciano{} are still event, albeit being of smaller magnitude.

Focusing on QGNN, 
\quater{} outperforms it on 9 out of 10 datasets by an average of 25.58\%.
This further reinforces that the better performance of \quater{} is largely due to it relying on our proposed Laplacian matrix $L^\ku$ rather than on the mere adoption of quaternionic weights as done in QGNN with the classical Laplacian.
%
%
%

Table~\ref{tab:threeclass} suggests that simpler methods designed for undirected graphs perform increasingly better when $\delta$ increases on the {\tt Di150} dataset.
This is likely due the fact that these graphs feature a small difference in edge weight. If $A_{uv} \simeq A_{vu}$, not much is lost if the (almost symmetric) digon $(u,v),(v,u)$ is reduced to a single undirected edge of weight $\frac{1}{2}(A_{uv}+A_{vu})$, as done when, e.g., using the classical real-valued Laplacian matrix, as this would only lead to a small loss of information, if any.
What is more, the larger the number of digons, the more the graph becomes close to being undirected if the difference in weight is small, which explains why simpler (and arguably easier to train) methods designed for undirected graphs achieve an increasingly better performance as the percentage of digons $\delta$ increases.

\subsection{Four/Five-Class Edge Prediction Task (4/5CEP)}

The 4CEP task is to predict whether $(u, v) \in E^+$, $(u, v) \in E^-$, $(v, u) \in E^+$, and $(v, u) \in E^-$ (with $E^+$ and $E^-$ being the positive- and negative-weight edges), while the 5CEP task also considers the class where $(u, v) \notin E \wedge (v, u) \notin  E$. 
Due to their nature, for both tasks we focus on every dataset featuring both positive and negative weights, i.e., on all the real-world datasets except for {\tt Telegram}.
%
We run the experiments with $5$-fold cross-validation with an 80\%/20\% split for training and testing, preserving graph connectivity.

Due to the nature of the tasks, we compare \quater{} against the only methods that can handle the sign of the edge weights, i.e.: \textit{i)} the signed graph neural networks SGCN~\citep{derr2018signed}, SiGAT~\citep{huang2019signed}, SNEA~\cite{li2020learning}, SDGNN~\citep{huang2021sdgnn}, and SSSNET~\citep{he2022sssnet}; and \textit{ii)} the spectral GCNs that are well-defined for negative edge weights, i.e., SigMaNet~\citep{fiorini2022sigmanet} and MSGNN~\cite{he2022msgnn}.

Table~\ref{tab:four-class} and~\ref{tab:five-class} show that, when compared to the other approaches, \quater{} achieves superior performance in 8 out of 10 cases, while being the second-best model in the other 2. In comparison with the second-best model, \quater{} achieves an average performance improvement of 1.14\%, with a maximum of 3.61\% and a minimum of 0.13\%.

While not as large as for the NC task, the better performance that \quater{} achieves on the 4CEP and 5CEP tasks confirms the superior performance of the model we proposed.


\section{Conclusions}\label{sec:conclusion}

We have proposed the \textit{\laplaciano{}} $L^\ku$, a quaternion-valued graph Laplacian which generalizes different previously-proposed Laplacian matrices while allowing for the seamless representation of graphs and digraphs of any weight and sign featuring any number of digons without
suffering from losses of topological information.
We have then proposed \quater{}, a spectral GCN with quaternionic network weights that employs a quaternion-valued convolution operator built on top of $L^\ku$.
Our extensive experimental campaign has highlighted the advantages of employing our quaternion-valued graph Laplacian matrix to leverage the full topology of input graphs featuring digons.
Future works include extending the \laplaciano{} to multi-relational graphs with multiple directional edges and to temporal (time-extended) graphs.

\section* {Acknowledgements}
This work was partially funded by
the PRIN 2020 project \textit{ULTRA OPTYMAL - Urban Logistics and sustainable TRAnsportation: OPtimization under uncertainty and MAchine Learning} (grant number 20207C8T9M)
and the PRIN-PNRR 2022 project {\em HEXAGON: Highly-specialized EXact Algorithms for Grid Operations at the National level} (grant number P20227CYT3),
both funded by the Italian University and Research Ministry.

\clearpage
\bibliography{BIB}

 \clearpage
 \appendix
 \everymath{\allowdisplaybreaks}

\section{Code Repository and Licensing}\label{appx:implementation}

The code written for this research work (which, in particular, implements \quater{}) is available at \url{https://anonymous.4open.science/r/QuaterGCN} and freely distributed under the Apache 2.0 license.\footnote{\url{https://www.apache.org/licenses/LICENSE-2.0}}

The datasets (except for WikiRfa) and some code components were obtained from the PyTorch Geometric Signed Directed~\citep{he2022pytorch} library (provided under the MIT license).
The WikiRfa dataset is available at \url{https://networks.skewed.de/net/wiki_rfa} under the BSD licence.\footnote{\url{https://choosealicense.com/licenses/bsd-2-clause/}}
The methods used for the experimental analysis are available at \url{https://github.com/stefa1994/sigmanet} under the MIT license.\footnote{\url{https://choosealicense.com/licenses/mit/}}

\section{Properties of the \laplaciano{}}\label{appx:theorem}

This section contains the proofs of the theorems reported in the main paper.

\setcounter{theorem}{0}
\begin{theorem}
$L^{\ku} = L$ for every graph $G$ with $A$ symmetric and nonnegative and $D_{vv} > 0$ for all $v \in V$.
\end{theorem}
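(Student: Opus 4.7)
The plan is to prove $L^\ku = L$ by direct substitution into the defining formula of $\symbolapl$. Since $A$ is symmetric and nonnegative, I would first evaluate each of the structural indicator matrices $T, O, N, R$ under this hypothesis and show that the asymmetry-detecting matrices $N$ and $R$ vanish.

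Specifically, I would observe the following. Because $A = A^\top$, we have $|A - A^\top| = 0$, hence $N = \sgn(|A - A^\top|) = 0$; similarly $|A| - |A^\top| = 0$, so $R = 0$. Also, $T = \sgn(|A|)$ is symmetric, which gives $O = T \odot T^\top = T$. Plugging these into the definitions of the four $H^i$ matrices, the three skew-symmetric components collapse: $H^1 = R \odot(ee^\top - O) = 0$, $H^2 = O \odot N \odot (\cdot) = 0$, and $H^3 = -H^2 = 0$. Only $H^0 = ee^\top - N = ee^\top$ survives.

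Next, I would simplify the weight matrices. Since $A$ is symmetric, $A^1_s = \tfrac{1}{2}(A + A^\top) = A$, and since $A$ is nonnegative, $|A^1_s| = A$, so $\bar D_s = \Diag(|A^1_s|\,e) = \Diag(Ae) = D$. Substituting everything into the definition of $H^\ku$ gives
\begin{equation*}
H^\ku \;=\; A^1_s \odot (H^0 + \ii H^1) + \jj\, A^2_s \odot H^2 + \kk\, A^3_s \odot H^3 \;=\; A \odot ee^\top \;=\; A,
\end{equation*}
a purely real matrix. Therefore $\symbolapl = \bar D_s - H^\ku = D - A = L$.

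There is no real obstacle in this argument — it is a direct unwinding of definitions. The only minor point worth double-checking is the diagonal: the Hadamard product with $ee^\top$ leaves every entry of $A$ unchanged (including diagonal entries, if any self-loops are present), so the identity $H^\ku = A$ holds entrywise, and the matching between $\bar D_s$ and $D$ on the diagonal together with the vanishing of the imaginary contributions yields $L^\ku = L$ exactly, not merely up to some boundary correction.
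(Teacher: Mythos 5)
Your proof is correct and follows essentially the same route as the paper's: both arguments observe that symmetry of $A$ forces the asymmetry-detecting matrices to vanish, so that $H^{\ku}$ reduces to its real part $A^1_s \odot H^0 = A$ and $\bar D_s$ reduces to $D$. Your version is in fact more explicit and self-contained than the paper's (which invokes two matrices, $W^D$ and $N^D$, not defined in the text), and your final check that the Hadamard product with $ee^\top$ acts as the identity is a sensible, if routine, verification.
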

\begin{proof}
As $G$ is undirected, its adjacency matrix $A$ is symmetric. Since, for such any such graph, $W^D = 0$ and $N^D = 0$ hold, we deduce $H^{\ku} = H^0$. This implies $\symbolapl = L$, from which the claim follows.
\end{proof}

\begin{theorem}
$L^{\ku} = L^{\sigma}$ if, for all $u,v \in V$, either $A_{uv} = 0$, or $A_{uv} = A_{vu}$.
\end{theorem}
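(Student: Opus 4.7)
The plan is to show that the hypothesis is in fact equivalent to $A$ being symmetric, and then verify that under this symmetry every component matrix appearing in the definitions of $H^{\ku}$ and $H^{\sigma}$ collapses in a way that makes both Laplacians coincide with the same real-valued matrix built from $A$.

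First, I would dispose of the hypothesis. If for some pair $u,v \in V$ one had $A_{uv}=0$ but $A_{vu}\neq 0$, then applying the hypothesis to the ordered pair $(v,u)$ forces $A_{vu} = A_{uv} = 0$, contradicting $A_{vu}\neq 0$. Hence $A=A^\top$, i.e., $A$ is symmetric (with entries of arbitrary sign). From this I immediately get $A_s = A_s^1 = A$, $A-A^\top = 0$ and $|A|-|A^\top| = 0$, so that $N = \sgn(|A-A^\top|) = 0$ and $R = \sgn(|A|-|A^\top|) = 0$. Moreover $T=\sgn(|A|)$ is symmetric, hence $O = T\odot T^\top = T$.

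Next I would substitute these identities into the definitions of the four building-block matrices. We obtain $H^0 = ee^\top - N = ee^\top$, $H^1 = R\odot(ee^\top-O) = 0$, and $H^2 = O\odot N \odot(U(T)-L(T^\top)) = 0$, hence $H^3 = -H^2 = 0$ as well. Plugging these into the formula for $H^{\ku}$ yields
\[
H^{\ku} \;=\; A_s^1 \odot (ee^\top + \ii\,0) + \jj\, A_s^2\odot 0 + \kk\, A_s^3 \odot 0 \;=\; A_s^1 \;=\; A.
\]
On the $L^{\sigma}$ side, the same identities give
\[
H^{\sigma} \;=\; A_s \odot \bigl(ee^\top - \sgn(|A-A^\top|) + \ii\,\sgn(|A|-|A^\top|)\bigr) \;=\; A_s\odot ee^\top \;=\; A.
\]

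Finally I would check that the two diagonal normalisations agree: since $A_s = A_s^1 = A$, the matrix $\bar D_s = \Diag(|A_s|e)$ used in $L^{\sigma}$ coincides with $\bar D_s = \Diag(|A_s^1|e)$ used in $L^{\ku}$. Therefore $L^{\ku} = \bar D_s - H^{\ku} = \bar D_s - H^{\sigma} = L^{\sigma}$, which proves the theorem. No step of this argument is really hard; the only point that deserves care is the very first one, namely recognising that the disjunctive hypothesis, quantified over all ordered pairs, already implies full symmetry of $A$ and hence trivialises both $N$ and $R$.
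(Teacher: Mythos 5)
Your algebra is internally consistent: if $A=A^\top$ then indeed $N=R=0$, $O=T$, $H^0=ee^\top$, $H^1=H^2=H^3=0$, both $H^{\ku}$ and $H^{\sigma}$ reduce to $A$, and the two degree matrices coincide (in fact $A_s=A_s^1$ always). The problem is what you end up proving. You are right that the hypothesis, read literally as a disjunction over all ordered pairs, forces $A=A^\top$; but that reading collapses the theorem to undirected graphs, which empties it of its intended content, namely that $L^{\ku}$ agrees with $L^{\sigma}$ on genuinely \emph{directed} graphs. The paper's own proof makes the intended hypothesis unambiguous: its first case is ``$A_{uv}\neq 0$ and $A_{vu}=0$'', a single directed edge with no antiparallel partner, which your reading excludes outright. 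The intended condition is ``no asymmetric digons'' (whenever both $A_{uv}$ and $A_{vu}$ are nonzero, they are equal), under which $A$ need not be symmetric.

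Under that intended hypothesis your argument breaks at the second step: for a single directed edge $(u,v)$ one has $N_{uv}=R_{uv}=1$ and $O_{uv}=0$, hence $H^0_{uv}=0$ and $H^1_{uv}=1$, so $H^{\ku}_{uv}=\ii\,\tfrac{1}{2}A_{uv}$ and the identity $H^{\ku}=A$ no longer holds. One must instead check entrywise that this matches $H^{\sigma}_{uv}=(A_s)_{uv}\bigl(1-\sgn(|A_{uv}-A_{vu}|)+\ii\,\sgn(|A_{uv}|-|A_{vu}|)\bigr)=\ii\,\tfrac{1}{2}A_{uv}$, alongside the symmetric-digon case (which your computation does cover, since there $H^0_{uv}=1$ and $H^1_{uv}=0$). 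That two-case entrywise comparison is exactly the paper's proof. So either flag explicitly that you are proving only the degenerate literal statement, or rerun the argument under the weaker hypothesis and add the missing case for unpaired directed edges.
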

\begin{proof}
%
Given a weighted directed graph, for each $u,v \in V$, if $A_{uv} \neq 0$ and $A_{vu} = 0$, we have $H^{\sigma}_{uv} = - H^{\sigma}_{uv} = 0 + \ii \frac{1}{2} = H^{\ku}_{uv} = - H^{\ku}_{vu}= 0 +  \ii \frac{1}{2} + \jj0 + \kk0 $; if $A_{uv} = A_{vu} = a$, with $a$ being some nonnegative real value, we have $H^{\sigma}_{uv} = H^{\sigma}_{uv} = 1 + \ii 0 = H^{\ku}_{uv} = H^{\ku}_{vu}= 1 +  \ii0 + \jj0 + \kk0$.
Thus, since $H^{\ku} = H^\sigma$, we have $L^{\ku} = L^{\sigma}$ and the claim follows.
\end{proof}
\begin{theorem}
    Consider a weighted digraph without symmetric digons and let $H^m := \left( A_s^2 \odot H^2_1 + A_s^3 \odot H^3_1 \right)$, where $H^2_1 = N \odot \big(U(T) - L(T^\top) \big)$ and $H^3_1 = - H^2_1$. We have $H^0 = 0$ and $H^{\ku} = H^{\sigma} \odot \left( e e^\top - O \right) + O \odot H^m$. Thus, each component of $H^{\ku}_{uv}$ is a linear combination of the component $H^\sigma_{uv}$ of the Sign-Magnetic Laplacian $H^{\sigma}$ and of the components $H^m_{uv}$ of the quaternionic Hermitian matrix $H^m$.
\end{theorem}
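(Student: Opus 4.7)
The plan is to establish the identity pointwise, using a short case analysis on the local structure at $(u,v)$: no edge, a single directed edge, or an asymmetric digon (the only configurations permitted in a graph without symmetric digons). The decomposition has two natural parts, obtained from the partition $ee^\top = (ee^\top - O) + O$, which separates non-digon entries from digon entries; the factor $H^\sigma \odot (ee^\top - O)$ will account for the former and $O \odot H^m$ for the latter.

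I would first dispose of the $H^0$ contribution. Since $H^0 := ee^\top - N$ and, by the no-symmetric-digon hypothesis, $N_{uv} = 0$ holds precisely when $A_{uv} = A_{vu} = 0$ (which is also precisely when $(A^1_s)_{uv} = 0$), at every $(u,v)$ either $N_{uv} = 1$ forces $H^0_{uv} = 0$, or $N_{uv} = 0$ forces $(A^1_s)_{uv} = 0$. Hence $A^1_s \odot H^0 \equiv 0$, which is the sense in which the claim ``$H^0 = 0$'' appears inside the formula for $H^{\ku}$. Consequently,
\[
H^{\ku} \;=\; \ii\, A^1_s \odot H^1 \;+\; \jj\, A^2_s \odot H^2 \;+\; \kk\, A^3_s \odot H^3.
\]

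For the non-digon restriction, I would note that on any $(u,v)$ with $O_{uv} = 0$ both $H^2$ and $H^3$ vanish by construction (each carries $O$ as a Hadamard factor) and $H^1_{uv} = R_{uv}(1 - O_{uv}) = R_{uv}$. Hence $H^{\ku} \odot (ee^\top - O) = \ii\, A^1_s \odot R \odot (ee^\top - O)$. Applying the same argument that gave $A^1_s \odot H^0 = 0$ to $H^\sigma = A_s \odot (ee^\top - N + \ii R)$, its real part also vanishes, so $H^\sigma \odot (ee^\top - O) = \ii\, A^1_s \odot R \odot (ee^\top - O)$ as well, and the two restrictions coincide. For the digon restriction, on any $(u,v)$ with $O_{uv} = 1$ the hypothesis forces $N_{uv} = 1$, and $H^1_{uv} = R_{uv}(1 - O_{uv}) = 0$ kills the $\ii$-component; what remains is $\jj\, A^2_s \odot H^2 + \kk\, A^3_s \odot H^3$, and since $H^2 = O \odot H^2_1$ and $H^3 = O \odot H^3_1$, this equals $O \odot H^m$ once the $\jj, \kk$ factors are absorbed into the quaternionic structure of $H^m$. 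Summing the two restrictions yields the identity.

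The main obstacle is less mathematical than bookkeeping: the definitions of $H^0, H^1, H^2, H^3$ and of $A^1_s, A^2_s, A^3_s$ combine upper/lower-triangular projections with signed indicator masks and Hadamard products, and one must track each mask carefully in each of the three cases. A minor subtlety is to reconcile the bare expression for $H^m$ with its role as a quaternionic Hermitian block: the $\jj$ and $\kk$ units must be understood to multiply the $H^2_1$ and $H^3_1$ terms respectively, consistently with the form of $H^{\ku}$. Once this is made explicit, each case reduces to a direct computation.
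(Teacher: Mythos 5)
Your proposal is correct and follows essentially the same route as the paper's proof: split $H^\ku$ via $ee^\top = (ee^\top - O) + O$, kill the real part, match the $\ii$-component to $H^\sigma$ off the digons and the $\jj,\kk$-components to $H^m$ on them. You are in fact more careful than the paper on two points it glosses over — that it is $A^1_s \odot H^0$ (not $H^0$ itself) that vanishes, and that the $\jj,\kk$ units must be read into the definition of $H^m$ — both of which are worth making explicit.
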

In the statement, the matrix $H^m$ coincides, by construction, with the imaginary parts $\Im_2(H^\ku)$ and $\Im_3(H^\ku)$ of the matrix $H^\ku$ and, therefore, encodes all the digons of the graph that do not have equal weights.
\begin{proof}
    %

    Given a weighted digraph without any digons of equal weight, $H^{\ku} = A^1_s \odot H^0 + \ii A^1_s \odot H^1 + \jj A_s^2 \odot  H^{2} + \kk A_s^3 \odot H^{3}$, with $H^0 = 0$ due to the nature of the graph under consideration.
    Collecting the first term of $H^{\ku}$ by $\big( e e^\top - O^D \big)$ shows that it coincides with $H^\sigma \odot \big( e e^\top - O^D \big)$. 
    Collecting the second two terms by $O^D$ shows that they coincide with $\big( H^{m} \odot O^D \big)$.
    This concludes the proof.
\end{proof}

\newpage
\begin{theorem}
$L^{\ku}$ and $L^{\ku}_{norm}$ are positive semidefinite.
\end{theorem}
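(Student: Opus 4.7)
The plan is to reduce positive semidefiniteness of $L^{\ku}$ to the pointwise inequality $x^* L^{\ku} x \geq 0$ for every $x \in \HH^n$, and then transfer the property to $L^{\ku}_{\text{norm}}$ via a diagonal congruence. Since $L^{\ku}$ is Hermitian by construction and the right eigenvalues of a Hermitian quaternion matrix are real, showing that the quadratic form is a non-negative real number is equivalent to the spectrum being non-negative. Concretely I would write $x^* L^{\ku} x = x^* \bar D_s x - x^* H^{\ku} x$ and group the off-diagonal contributions by unordered vertex pairs $\{u,v\}$.

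I would then partition those pairs into three classes using the case analysis of $H^{\ku}_{uv}$ already carried out in the paper: (i) single directed edges, whose off-diagonal entries live in the $\ii$-direction; (ii) symmetric digons, whose off-diagonal entries are purely real; (iii) asymmetric digons, whose entries live in the $\jj$-$\kk$ plane. For classes (i) and (ii) the pair-contribution $C_{uv}(x)$ collapses to a perfect square: a single edge of weight $w=A_{uv}$ yields $\tfrac{|w|}{2}\bigl|x_u - \sgn(w)\,\ii\,x_v\bigr|^2$, and a symmetric digon of weight $w$ yields $|w|\bigl|x_u - \sgn(w)\,x_v\bigr|^2$. Both identities follow from expanding the quaternion norm, using $\ii^2=-1$ in the first case and Hermitian symmetry in the second, and are routine.

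The main obstacle will be class (iii). Here the diagonal contribution from $\bar D_s$ is $\tfrac{|w_1+w_2|}{2}$ at both $u$ and $v$, whereas the off-diagonal entries of $L^{\ku}$ are $\mp\bigl(\tfrac{w_1}{2}\jj - \tfrac{w_2}{2}\kk\bigr)$. I would attempt to express $C_{uv}(x)$ as a two-term sum of squares $\alpha\bigl|x_u + s\jj x_v\bigr|^2 + \beta\bigl|x_u + t\kk x_v\bigr|^2$, exploiting the anticommutation $\jj\kk + \kk\jj = 0$ so that the $\jj$- and $\kk$-cross-terms decouple cleanly. Matching the four coefficients (those of $|x_u|^2$, $|x_v|^2$, and of the $\jj$- and $\kk$-imaginary sesquilinear off-diagonals) reduces to an algebraic system in $\alpha,\beta,s,t$; the delicate step is reconciling the diagonal weight $|w_1+w_2|$ with the off-diagonal magnitude $\sqrt{w_1^2+w_2^2}$ produced by orthogonality of $\jj$ and $\kk$. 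I expect this to be the bottleneck of the argument and the place where any extra structural assumption on the weights must be brought to bear.

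Finally, $L^{\ku}_{\text{norm}} = \bar D_s^{-1/2} L^{\ku} \bar D_s^{-1/2}$ is obtained from $L^{\ku}$ by a real, diagonal congruence, so positive semidefiniteness transfers at once: the substitution $y = \bar D_s^{-1/2} x$, well-defined whenever every diagonal entry of $\bar D_s$ is strictly positive, yields $x^* L^{\ku}_{\text{norm}} x = y^* L^{\ku} y \geq 0$ by the preceding argument.
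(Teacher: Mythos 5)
Your reduction to the quadratic form, the congruence argument $y=\rsqrt{\bar D_s}x$ for $L^{\ku}_{\text{norm}}$, and the perfect-square identities for single edges and symmetric digons are all correct and consistent with the paper. The genuine gap is that you stop at case (iii), and case (iii) --- asymmetric digons --- is the only case in which $\symbolapl$ differs from $L$ and $L^\sigma$, so as written your argument only re-proves positive semidefiniteness for graphs the earlier Laplacians already cover. Moreover, the step you defer is not merely delicate: it fails without an extra hypothesis. For an isolated asymmetric digon with $w_1=A_{uv}$, $w_2=A_{vu}$ ($u<v$) the off-diagonal entry is $q=L^{\ku}_{uv}=-\jj\tfrac{w_1}{2}+\kk\tfrac{w_2}{2}$ and the pair contribution to $x^*\symbolapl x$ is
\begin{equation*}
C_{uv}(x)=\tfrac{|w_1+w_2|}{2}\left(|x_u|^2+|x_v|^2\right)+2\Re\left(\bar x_u\, q\, x_v\right).
\end{equation*}
Taking $x_v=1$ and $x_u=-q/|q|$ gives $C_{uv}(x)=|w_1+w_2|-\sqrt{w_1^2+w_2^2}$, which is nonnegative if and only if $w_1w_2\geq 0$. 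So the ``extra structural assumption on the weights'' you anticipate is genuinely needed (e.g.\ for $w_1=1$, $w_2=-1$ one gets $\bar D_s=0$ but $H^{\ku}\neq 0$, and no sum-of-squares decomposition can exist); identifying the bottleneck is not the same as resolving it.

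The paper's proof takes a different route through this case: it splits $\symbolapl$ into $\Re(\symbolapl)$ and three real skew-symmetric imaginary components, asserts that the latter contribute nothing to $x^*\symbolapl x$, and then runs your sum-of-squares expansion on $\Re(\symbolapl)$ alone; since $H^0_{uv}=0$ on asymmetric digons, those digons then only add nonnegative diagonal mass and the hard case disappears. Your pairwise bookkeeping is actually the more careful one here, because for quaternion-valued $x$ the cross term $2\Re(\bar x_u q x_v)$ with $q$ in the $\jj$--$\kk$ plane does not vanish in general, as your own setup (and the computation above) makes plain. To close your proof honestly, add the hypothesis $A_{uv}A_{vu}\geq 0$ on digons; then $|w_1+w_2|\geq\sqrt{w_1^2+w_2^2}=2|q|$, and the elementary bound $2\Re(\bar x_u q x_v)\geq -2|q|\,|x_u|\,|x_v|$ together with AM--GM gives $C_{uv}(x)\geq \left(\tfrac{|w_1+w_2|}{2}-|q|\right)\left(|x_u|^2+|x_v|^2\right)\geq 0$, which is the missing piece of your case (iii).
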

\begin{proof}
By  definition of $L^\ku$, we have $\Re(\symbolapl) = \bar D_s - A^1_s \odot H^0$, $\Im_1(\symbolapl) = - A^1_s \odot H^1$, $\Im_2(\symbolapl) = - A^2_s  \odot H^2$ and $\Im_3(\symbolapl) = - A^3_s  \odot H^3$. By construction, $\Re(\symbolapl)$ is symmetric and $\Im_1(\symbolapl)$, $\Im_2(\symbolapl)$, and $\Im_3(\symbolapl)$ are skew symmetric. It follows that $\symbolapl$ is a Hermitian matrix, which implies that $x^*\Im_1(\symbolapl)x=0$, $x^*\Im_2(\symbolapl)x=0$ and $x^*\Im_3(\symbolapl)x=0$ hold for all $x \in \HH^{n}$.
As, by construction, $\bar D_s = \Diag(|A^1_s| \, e)$ and $A^1_s$ is symmetric, we can show that $x^*\Re(\symbolapl)x \geq 0$ holds for all $x \in \HH^{n}$ via the following derivation:

\scriptsize
$2x^*\Re(\symbolapl)x$
\begin{align*}
    & = 2 \sum_{u,v=1}^n (\bar D_s)_{uv} x_u x_v^* - 2 \sum_{u,v=1}^n (A^1_s)_{uv}x_u x_v^*  \left(1 - H^0_{uv}\right)\\
    & = 2 \sum_{u,v=1}^n (\bar D_s)_{uv} x_u x_v^* - 2 \sum_{u,v=1}^n (A^1_s)_{uv}x_u x_v^*  \left(1 - \sgn(|A_{uv} - A_{vu}|)\right)\\
    & = 2 \sum_{i=1}^n (\bar D_s)_{uu} x_u x_u^* - 2 \sum_{u,v=1}^n (A^1_s)_{uv}x_u x_v^* \left(1 - \sgn(|A_{uv} - A_{vu}|)\right) \\
    & = 2 \sum_{u,v=1}^n |(A^1_s)_{uv}| |x_u|^2 - 2 \sum_{u,v=1}^n (A^1_s)_{uv}x_u x_v^* \left(1 - \sgn(|A_{uv} - A_{vu}|)\right)\\
    & = \sum_{u,v=1}^n |(A^1_s)_{uv}| |x_u|^2 + \sum_{u,v=1}^n |(A^1_s)_{vu}| |x_v|^2 \\ 
    & - 2 \sum_{u,v=1}^n (A^1_s)_{uv}x_u x_v^* \left(1 - \sgn(|A_{uv} - A_{vu}|)\right)\\
    & = \sum_{u,v=1}^n |(A^1_s)_{uv}| |x_u|^2 + \sum_{u,v=1}^n |(A^1_s)_{uv}| |x_v|^2\\ & - 2 \sum_{u,v=1}^n (A^1_s)_{uv}x_u x_v^* \left(1 - \sgn(|A_{uv} - A_{vu}|)\right)\\
    & = \sum_{u,v=1}^n |(A^1_s)_{uv}| |x_u|^2 + \sum_{u,v=1}^n |(A^1_s)_{uv}| |x_v|^2 \\ & - 2 \sum_{u,v=1}^n |(A^1_s)_{uv}| \sgn((A^1_s)_{uv}) x_u x_v^* \left(1 - \sgn(|A_{uv} - A_{vu}|)\right)\\
    & = \sum_{u,v=1}^n |(A^1_s)_{uv}| \Big(|x_u|^2 + |x_v|^2 - 2 \sgn(A_{uv}) x_u x_v^* \left(1 - \sgn(|A_{uv} - A_{vu}|)\right)\Big) \;\\
    & \geq \sum_{u,v=1}^n |(A^1_s)_{uv}| \Big(|x_u| - \sgn(A_{uv})|x_v| \Big)^2 \geq 0.
\end{align*}
\normalsize
This shows, as desired, that $\symbolapl \succeq 0$.
Let us now consider the {\em normalized \laplaciano{}}, which, according to Eq.~\eqref{eq:norma}, is defined as $\symbolapl_\text{norm} = \rsqrt{\bar D_s} \symbolapl \rsqrt{\bar D_s}$. We need to show that $x^*\symbolapl_{\text{norm}}x \geq 0$ for all $x \in \HH^n$. Letting $y =\rsqrt{\bar D_s}x$, we have $x^*\symbolapl_{\text{norm}}x = x^*\rsqrt{\bar D_s} \symbolapl \rsqrt{\bar D_s}x = y^*\symbolapl y$. As the latter quantity was proven before to be nonnegative for all $y \in \HH^n$, $\symbolapl_{\text{norm}} \succeq 0$ follows.
\end{proof}

\begin{theorem} 
$\lambda_\text{max}(\symbolapl_{\text{norm}}) \leq 2$.
\end{theorem}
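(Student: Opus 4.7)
The plan is to verify $\lambda_\text{max}(L^{\ku}_{\text{norm}}) \leq 2$ by showing that $2I - L^{\ku}_{\text{norm}}$ is positive semidefinite. Starting from $L^{\ku}_{\text{norm}} = I - \rsqrt{\bar D_s} H^{\ku} \rsqrt{\bar D_s}$, I would rewrite
\[
2I - L^{\ku}_{\text{norm}} = I + \rsqrt{\bar D_s} H^{\ku} \rsqrt{\bar D_s},
\]
and then set $y = \rsqrt{\bar D_s}\, x$. Since $\rsqrt{\bar D_s}$ is invertible under the standing assumption that no vertex is isolated, the required bound $x^*(2I - L^{\ku}_{\text{norm}})x \geq 0$ for all $x \in \HH^n$ is equivalent to
\[
y^*\bigl(\bar D_s + H^{\ku}\bigr) y \geq 0 \qquad \text{for all } y \in \HH^n.
\]

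To establish this, I would mirror the sum-of-squares derivation used in the proof of Theorem~\ref{thm:psd}, merely flipping the sign of the cross-term. Because $\bar D_s + H^{\ku}$ shares the same imaginary coefficient matrices as $H^{\ku}$, namely the real skew-symmetric matrices $-A^1_s \odot H^1$, $-A^2_s \odot H^2$, $-A^3_s \odot H^3$, the same skew-symmetry argument invoked by the authors for Theorem~\ref{thm:psd} gives $y^* \Im_k(\bar D_s + H^{\ku}) y = 0$ for $k = 1,2,3$, reducing the task to showing that $y^*(\bar D_s + A^1_s \odot H^0) y \geq 0$. Expanding this quadratic form and symmetrizing via $(\bar D_s)_{uu} = \sum_v |A^1_s|_{uv}$ yields
\begin{equation*}
2\, y^*\bigl(\bar D_s + A^1_s \odot H^0\bigr) y = \sum_{u,v=1}^{n} |A^1_s|_{uv}\, \Bigl(|y_u|^2 + |y_v|^2 + 2\,\sgn\bigl((A^1_s)_{uv}\bigr)\, H^0_{uv}\, \Re(y_u^* y_v)\Bigr).
\end{equation*}
Because $|\Re(y_u^* y_v)| \leq |y_u||y_v|$ by the multiplicativity of the quaternion norm, each summand is at least $|A^1_s|_{uv}(|y_u| - |y_v|)^2 \geq 0$ when $H^0_{uv} = 1$ and equal to $|A^1_s|_{uv}(|y_u|^2 + |y_v|^2) \geq 0$ when $H^0_{uv} = 0$. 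Summing over $u,v$ gives the desired inequality.

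The main obstacle is exactly the one already hidden inside the proof of Theorem~\ref{thm:psd}: justifying rigorously that the three skew-symmetric imaginary coefficient matrices contribute nothing to the real-valued quadratic form $y^*(\bar D_s + H^{\ku})y$. Once this is accepted, as the authors do in the earlier proof, the remainder is a routine "$+$" analogue of the sum-of-squares bound, and the spectral bound $\lambda_\text{max}(L^{\ku}_{\text{norm}}) \leq 2$ follows.
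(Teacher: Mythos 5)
Your proposal follows essentially the same route as the paper's proof: both reduce the claim to the positive semidefiniteness of $B := \bar D_s + H^{\ku}$ via the substitution $y = \rsqrt{\bar D_s}x$, discard the imaginary parts by the Hermitian/skew-symmetry argument already used for Theorem~\ref{thm:psd}, and close with the sign-flipped sum-of-squares estimate on the real part. The only cosmetic difference is that the paper phrases the first step through the Courant--Fischer characterization of $\lambda_{\max}$ while you argue directly that $2I - L^{\ku}_{\text{norm}} \succeq 0$; the substance, including the reliance on the vanishing of the skew-symmetric contributions, is identical.
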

\begin{proof}

As the Courant-Fischer theorem applied to $\symbolapl_{\text{norm}}$ implies:
\begin{equation*}
 \lambda_{\text{max}}= \max_{x \neq 0} \frac{x^*\symbolapl_{\text{norm}}x}{x^*x},  
\end{equation*}
we need to show that the following holds:

\begin{equation*}
    \max_{x \neq 0} \frac{x^*\symbolapl_{\text{norm}}x}{x^*x} \leq 2.
\end{equation*}

For the purpose, let $B := \bar D_s + H^{\ku}$ and let $B_{\text{norm}} := \rsqrt{\bar D_s} B \rsqrt{\bar D_s}$ be its normalized counterpart. Assuming $B_{\text{norm}} \succeq 0$, we would deduce the following for all $x \in \HH^n$:
\begin{align*}
    & x^*B_{\text{norm}}x \geq 0 \Leftrightarrow  x^*\left( I + \rsqrt{D}H^{\ku}\rsqrt{D} \right)x \geq 0 \Leftrightarrow \\
    & -x^* \rsqrt{D} H^{\ku} \rsqrt{D}x \leq x^*x \Leftrightarrow \\
    & x^*Ix - x^* \rsqrt{D}H^{\ku} \rsqrt{D}x \leq 2x^*x.
\end{align*}
This would suffice to prove the claim as $x^*Ix - x^* \rsqrt{D}H^{\ku} \rsqrt{D}x \leq 2x^*x$ implies that $\frac{x^* \symbolapl_{\text{norm}}x}{x^*x} \leq 2$ holds for all $x \in \HH^n$.

Letting $y = \rsqrt{\bar D_s}x$, we observe that  $x^*B_{\text{norm}}x = x^*\rsqrt{\bar D_s} B \rsqrt{\bar D_s}x \geq 0$ holds for all $x \in \HH^n$ if and only if $y^*By\geq  0$ holds for all $y \in \HH^n$. Therefore, it suffices to show that $B \succeq 0$.

As $B$ is Hermitian by construction, we have $x^*\Im_1(B)x = x^*\Im_2(B)x = x^*\Im_3(B)x= 0$ for all $x \in \HH^n$. Thus, we only need to show that $x^*\Re(B)x \geq 0$ holds for all $x \in \HH^n$. This is done via the following derivation:

\scriptsize
$2x^*\Re(B)x$
\begin{align*}
& = 2 \sum_{u,v=1}^n (\bar D_s)_{uv} x_u x_v^* + 2 \sum_{u,v=1}^n (A^1_s)_{uv}x_u x_v^*  \left(1 - H^0_{uv}\right)\\
    & = 2 \sum_{u,v=1}^n (\bar D_s)_{uv} x_u x_v^* + 2 \sum_{u,v=1}^n (A^1_s)_{uv}x_u x_v^*  \left(1 - \sgn(|A_{uv} - A_{vu}|)\right)\\
    & = 2 \sum_{i=1}^n (\bar D_s)_{uu} x_u x_u^* + 2 \sum_{u,v=1}^n (A^1_s)_{uv}x_u x_v^* \left(1 - \sgn(|A_{uv} - A_{vu}|)\right) \\
    & = 2 \sum_{u,v=1}^n |(A^1_s)_{uv}| |x_u|^2 + 2 \sum_{u,v=1}^n (A^1_s)_{uv}x_u x_v^* \left(1 - \sgn(|A_{uv} - A_{vu}|)\right)\\
    & = \sum_{u,v=1}^n |(A^1_s)_{uv}| |x_u|^2 + \sum_{u,v=1}^n |(A^1_s)_{vu}| |x_v|^2\\
    & + 2 \sum_{u,v=1}^n (A^1_s)_{uv}x_u x_v^* \left(1 - \sgn(|A_{uv} - A_{vu}|)\right)\\
    & = \sum_{u,v=1}^n |(A^1_s)_{uv}| |x_u|^2 + \sum_{u,v=1}^n |(A^1_s)_{uv}| |x_v|^2\\ 
    & + 2 \sum_{u,v=1}^n (A^1_s)_{uv}x_u x_v^* \left(1 - \sgn(|A_{uv} - A_{vu}|)\right)\\
    & = \sum_{u,v=1}^n |(A^1_s)_{uv}| |x_u|^2 + \sum_{u,v=1}^n |(A^1_s)_{uv}| |x_v|^2 \\
    & + 2 \sum_{u,v=1}^n |(A^1_s)_{uv}| \sgn((A^1_s)_{uv}) x_u x_v^* \left(1 - \sgn(|A_{uv} - A_{vu}|)\right)\\
    & = \sum_{u,v=1}^n |(A^1_s)_{uv}| \Big(|x_u|^2 + |x_v|^2 + 2 \sgn(A_{uv}) x_u x_v^* \left(1 - \sgn(|A_{uv} - A_{vu}|)\right)\Big) \;\\
    & \geq \sum_{u,v=1}^n |(A^1_s)_{uv}| \Big(|x_u|^2 + |x_v|^2 \Big) \geq 0.
\end{align*}
\normalsize
This concludes the proof.
\end{proof}

\section{QuaterGCN's architecture}

\begin{figure*}[ht]
\includegraphics[width=\textwidth]{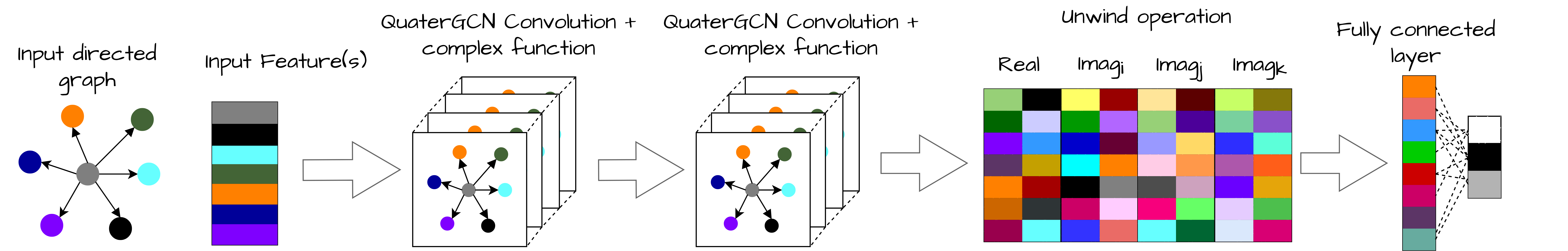}
\caption{Overview of \quater{}}
\label{fig:architecture}
\end{figure*}

Based on the task to be solved, \quater{} features one or more convolution layers followed by either a linear layer with weights $W$ or a $1D$ convolution. 
Considering, for example, a node-classification task of predicting which of a set of unknown classes a graph vertex belongs to, \quater{} implements the function
\small
\begin{equation*}
    \text{unwind}\left(Z^{\ku(2)} \left(Z^{\ku(1)} \left(X^{(0)} \right)\right)\right)       W,
\end{equation*}
\normalsize
where $X^{(0)} \in \HH ^{n \times c}$ is the input feature matrix, $Z^{\ku(1)} \in \HH^{n \times f_1}$ and $Z^{\ku(2)} \in \HH^{n \times f_2}$ are two convolutional layers, and $W \in \RR^{4f_2 \times d}$ are the weights of the linear layer (with $d$ being the number of classes).

Fig.~\ref{fig:architecture} illustrates the structure of the proposed \quater{} network for the node-classification task.  

\section{Further Details on the Datasets}\label{appx:dataset}

\paragraph{Real-World Datasets.} We tested \quater{} on six real-world datasets: {\tt Bitcoin-OTC} and {\tt Bitcoin Alpha}~\citep{7837846}; {\tt Slashdot} and {\tt Epinions}~\cite{leskovec2010signed}; {\tt WikiRfa}~\citep{west2014exploiting}; and {\tt Telegram}~\citep{bovet2020activity}. 

{\tt Bitcoin-OTC} and {\tt Bitcoin Alpha} are built from exchange operations executed within the Bitcoin-OTC and Bitcoin Alpha platforms, which were rated by the users with values between $-10$ to $+10$ ($0$ is not allowed). Scammers are given a score of $-10$. A score of $+10$ indicates full trust. Values in between indicate intermediate evaluations.

The other two datasets are {\tt Slashdot} and {\tt Epinions}. The first comes from a tech news website with a community of users. The website introduced Slashdot Zoo features that allow users to tag each other as friend or foe. The dataset represents a signed social network with friend $(+1)$ and enemy $(-1)$ labels. {\tt Epinions} is an online who-trust-who social network of a consumer review site (\textit{Epinions.com}). Site members can indicate their trust or distrust of other people's reviews. The network reflects people's views on others.

{\tt WikiRfa} is a collection of votes given by Wikipedia members gathered from 2003 to 2013. 
Indeed, any Wikipedia member can vote for support, neutrality, or opposition to a Wikipedia editor's nomination for administrator. 
This leads to a directed, 
multigraph (unrestricted in sign) in which nodes represent Wikipedia members and edges represent votes, which is then transformed into a simple graph by condensing any parallel edges into a single edge of weight equal to the sum of the weights of the original edges. The graph features a higher number of nodes and edges than the one proposed in~\cite{huang2021sdgnn}. 
In these five datasets, the classes of positive and negative edges are imbalanced (see Table \ref{tab:stats}). 

{\tt Telegram} models an influence network built on top of interactions among distinct users who propagate ideologies of a political nature. The graph encompasses 245 Telegram channels with 8912 links, with labels of four classes generated as explained in~\cite{bovet2020activity}.


\begin{table*}[htb]
\setlength{\tabcolsep}{4pt}
\centering
\caption{Statistics of the six datasets}
\label{tab:stats}
\begin{tabular}{lrrrrcccrr}
\hline
Data set & $n$ & $|\varepsilon^+|$ & $|\varepsilon^-|$ & \% pos & Directed & Weighted & Density & \multicolumn{1}{c}{\begin{tabular}[c]{@{}c@{}}Undirected\\ Edges\end{tabular}} & \multicolumn{1}{c}{\begin{tabular}[c]{@{}c@{}}Antiparallel Edges\\ with different/\\
opposite weight\end{tabular}} \\ \hline
{\tt Telegram} & 245 & 8,912 & 0 & 100.00 & \checkmark & \checkmark & 14.91\% & 2.22\% & 15.42\% \\
{\tt Bitcoin-Alpha} & 3,783 & 22,650 & 1,536 & 93.65 & \checkmark & \checkmark & 0.17\% & 59.57\% & 23.63\% \\
{\tt Bitcoin-OTC} & 5,881 & 32,029 & 3,563 & 89.99 & \checkmark & \checkmark & 0.10\% & 56.89\% & 22.34\% \\
{\tt WikiRfA} & 11,381 & 138,143 & 39,038 & 77.97 & \checkmark & \checkmark & 0.14\% & 5.27\% & 2.14\% \\
{\tt Slashdot} & 82,140 & 425,072 & 124,130 & 77.70 & \checkmark & \xmark & 0.01\% & 17.03\% & 0.71\% \\
{\tt Epinion} & 131,828 & 717,667 & 123,705 & 85.30 & \checkmark & \xmark & 0.01\% & 30.16\% & 0.64\% \\ \hline
\end{tabular}
\end{table*}



\paragraph{Synthetic Datasets.} The synthetic graphs we adopted belong to two classes:
\begin{enumerate}
    \item {\tt DBSM}: graphs obtained via a direct stochastic block model (DSBM) following~\citet{fiorini2022sigmanet}, with edge weights taking integer values in $\{2, 1000\}$.
    \item {\tt Di150} and {\tt Di500}: since the {\tt DBSM} graphs do not contain any digons, we introduce a second class of synthetic graphs with a variable percentage of digons $\delta \in (0, 1)$ with edge weights taking integer values in $\{2, 4\}$. {\tt Di150} contains graphs with 150 nodes per cluster, whereas {\tt Di500} contains graphs with 500 nodes per cluster.
\end{enumerate}

For the {\tt DSBM} graphs, we select a number of nodes in each cluster $N$ and a number of clusters $C$ by which the vertices are partitioned into groups of equal size. We introduce a set of probabilities $\{\alpha_{uv}\}_{1 \leq u,v \leq C}$, where $0 \leq \alpha_{uv} \leq 1$ with $\alpha_{uv} = \alpha_{vu}$. Such values coincide with the probability of generating an undirected edge between nodes $u$ and $v$ taken from different clusters.
$\alpha_{uu}$ is the probability of generating an undirected edge between nodes in the same cluster. As these graphs are undirected, we follow~\citet{fiorini2022sigmanet} and introduce a rule to transform the graph from undirected to directed: we define a collection of probabilities $\{\beta_{uv}\}_{1 \leq u,v \leq C}$ with $0 \leq \beta_{uv} \leq 1$ such that $\beta_{uv} + \beta_{vu} = 1$. Each edge $\{u, v\}$ is assigned a direction using the rule that the edge goes from $u$ to $v$ with probability $\beta_{uv}$ 
and from $v$ to $u$ with probability $\beta_{vu}$.

For the creation of the {\tt Di150} and {\tt Di500} graphs, we introduce an additional set of probabilities $\delta$ with $0 < \delta < 1$ that specify what percentage of edges from those generated by the aforementioned procedure should be preserved as undirected; all the other edges are transformed into a digon.

\section{Experiment Details}\label{appx:experiment}

\paragraph{Hardware.} The experiments were conducted on 3 different machines: the first features 1 NVIDIA Tesla T4 GPU, 380 GB RAM, and Intel(R) Xeon(R) Gold 6238R CPU @ 2.20GHz CPU; the second features 1 NVIDIA RTX 3090 GPU, 64 GB RAM, and 12th Gen Intel(R) Core(TM) i9-12900KF CPU @ 3.20GHz CPU; the third features 1 NVIDIA Ampere A100, 384 GB RAM, and 2x Intel(R) Xeon(R) Silver 4210 CPU @ 2.20GHz Sky Lake CPU.

\paragraph{Model Settings.} We train every learning model considered in this paper for up to 3000 epochs with early stopping whenever the validation error does not decrease after 500 epochs for node classification and three-class edge prediction tasks.
For the four-class edge prediction task and the five-class edge prediction task, we set the number of epochs to 300 for {\tt Bitcoin\_Alpha}, {\tt Bitcoin\_OTC} and {\tt WikiRfa}, while 500 epochs for {\tt Slashdot} and {\tt Epinions}.
%
Following~\citet{fiorini2022sigmanet}, we add a single dropout layer with a probability of $0.5$ before the last layer. We set the parameter $K$ adopted in ChebNet, MagNet, SigMaNet, QGNN, and \quater{} to 1. 
We adopt a learning rate of $10^{-3}$ for the node classification task and the three-class edge prediction task, whereas for the other two tasks (4/5CEP), we use a learning rate of $10^{-2}$. We employ the optimization algorithm ADAM with weight decays equal to $5 \cdot 10^{-4}$ (in order to avoid overfitting).
For SGCN, SNEA, SiGAT, and SDGNN, we use the same setting proposed in~\citet{he2022msgnn}.

A hyperparameter optimization procedure is adopted to identify the best set of parameters for each model. In particular, we tune the number of filters by selecting it in $\{16, 32, 64\}$ for the graph convolutional layers of all models except for DGCN. 

Some further hyperparameter values are:
\begin{itemize}
    \item MagNet's coefficient $q$ is chosen in $\{0.01, 0.05, 0.1, 0.15, 0.2, 0.25\}$.
    \item The coefficient $\alpha$ used in the PageRank-based models APPNP and DiGraph takes values in $\{0.05, 0.1, 0.15, 0.2\}$.
    \item For APPNP, we set $K = 10$ for the node-classification task (following~\cite{klicpera2018predict}) and take $K$ in $\{1, 5, 10\}$ for the three-class edge prediction task.
    \item For GAT, we select the number of heads in $\{2, 4, 8\}$.
    \item For DGCN,
    the number of filters per channel takes values in $\{5, 15, 30\}$.
    \item In GIN, the parameter $\epsilon$ is set to 0.
    \item In SSSNET, the parameters $\gamma_s$ and $\gamma_t$ are set to 50 and 0.1, respectively.
    10\% of the nodes per class are taken as seed nodes.
    \item In ChebNet, GCN, and QGNN, we employ the symmetrized adjacency matrix defined as $A_s = \frac{A + A^\top}{2}$.
    \item For DiGCL, we select {\em Pacing function} in [{\em linear}, {\em exponential}, {\em logarithmic}, {\em fixed}] using two settings: \textit{i)} $\tau = 0.4$, {\em drop feature rate 1} = 0.3 and {\em drop feature rate 2} = 0.4, and \textit{ii)} $\tau = 0.9$, {\em drop feature rate 1} = 0.2 and {\em drop feature rate 2} = 0.1.
    \item MSGNN's coefficient $q$ is chosen in $\{0.2q_0, 0.4q_0, 0.6q_0, 0.8q_0, q_0 \}$, where $q_0 := 1/[2 \; \text{max}_{u,v}(A_{uv} - A_{vu})]$.
    
\end{itemize}

\paragraph{Node Classification Task.} For the {\tt Telegram} dataset, we retain the dataset's original features. When experimenting with the synthetic datasets, the feature vectors are generated using the in-degree and out-degree procedure described before.

\paragraph{Three-Class Edge Prediction Task.} The feature matrix $X \in \mathbb{R}^{n \times 2}$ is defined in such a way that $X_{u1}$ corresponds to the in-degree of node $u$ and $X_{u2}$ to node $u$'s out-degree, for all $u \in N$. This allows the models to learn structural information directly from the adjacency matrices.

\paragraph{Four/Five-Class Edge Prediction Task.} The feature matrix $X \in \mathbb{R}^{n \times 2}$ is defined as in three class edge prediction task. However, for these tasks, the in-degree and out-degree are computed using the absolute values of their corresponding edge weights.

\section{More Experiments}

For completeness, we report the results obtained for the three-class edge prediction task on the {\tt Di500} graphs in Table~\ref{tab:no_performance}. We omitted these results from Subsection \textit{Three-Class Edge Prediction Task (3CEP)} since, as the table shows, on such graphs all the methods achieve a performance close to 33\% (i.e., close to what a purely random model would achieve)---in essence, the methods' predictive capabilities do not deviate significantly from what one would anticipate via a completely random prediction and, as such, these results are not particularly interesting. 

\begin{table}[htb!]
\centering
\caption{Accuracy (\%) on datasets of the three class edge prediction task}
\label{tab:no_performance}
\begin{tabular}{clll}
\hline
 & \multicolumn{3}{c}{Three Class Edge prediction} \\ \cline{2-4} 
 & \multicolumn{3}{c}{{\tt Di500}} \\ \cline{2-4} 
 & $\delta$ = 0.2 & $\delta$ = 0.5 & $\delta$ = 0.7 \\ \hline
ChebNet & 34.21$\pm$0.01 & 35.14$\pm$0.01 & 35.48$\pm$0.03 \\
GCN & 34.22$\pm$0.01 & {35.15$\pm$0.01} & {35.49${\pm}$0.02} \\
QGNN & 34.22$\pm$0.01 & 35.15$\pm$0.01 & 35.49$\pm$0.02 \\ \hline
APPNP & 34.21$\pm$0.01 & 35.13$\pm$0.01 & 35.48$\pm$0.02 \\
SAGE & 34.18$\pm$0.02 & {35.12$\pm$0.01} & {35.46$\pm$0.02} \\
GIN & 34.08$\pm$0.02 & 35.13$\pm$0.01 & 35.48$\pm$0.01 \\
GAT & 34.21$\pm$0.02 & 35.14$\pm$0.01 & 35.48$\pm$0.02 \\ \hline
DGCN & 34.21$\pm$0.02 & 35.14$\pm$0.02 & 35.48$\pm$0.02 \\
DiGraph & 34.62$\pm$0.16 & 34.76$\pm$0.19 & 35.16$\pm$0.15 \\
DiGCL & 32.89$\pm$0.01 & 32.43$\pm$0.01 & 32.25$\pm$0.01 \\
MagNet & 35.06$\pm$0.14 & 35.02$\pm$0.08 & 35.44$\pm$0.03 \\
\signum{} & {34.93$\pm$0.14} & 34.78$\pm$0.01 & 35.16$\pm$0.12 \\ \hline
\quater{} & {35.07$\pm$0.19} & {34.74${\pm}$0.06} & 35.25$\pm$0.09
\end{tabular}
\end{table}

\end{document}